\theoremstyle{plain}
\newtheorem{lemma}{Lemma}[section]
\newtheorem{proposition}[lemma]{\textbf{Proposition}}
\newtheorem{theorem}[lemma]{\textbf{Theorem}}
\theoremstyle{definition}
\newtheorem{definition}[lemma]{\textbf{Definition}}
\newtheorem*{notation}{\textbf{Notation}}
\theoremstyle{remark}
\newtheorem{remark}[lemma]{Remark}
\newcommand{\R}{\mathbb{R}}
\newcommand{\C}{\mathbb{C}}
\newcommand{\p}{\mathbb{P}}
\newcommand{\SE}{\mathrm{SE}_3}
\newcommand{\SO}{\mathrm{SO}_3}
\newcommand{\BSC}{\mathrm{BSC}}
\newcommand{\SBSC}{\mathrm{SBSC}}
\newcommand{\inv}{\mathrm{inv}}
\newcommand{\spn}{\mathrm{span}}
\renewcommand\epsilon{\varepsilon}
\renewcommand\tilde{\widetilde}
\newcommand{\tth}{\thinspace}
\title{Mobile Icosapods}
\author{M. Gallet}
\address{
M. Gallet \\
Johann Radon Institute for Computational and Applied Mathematics (RICAM) \\
Austrian Academy of Sciences \\
Altenberger Stra\ss e 69 \\
4040 Linz, AT.}
\email{matteo.gallet@ricam.oeaw.ac.at}
\author{G. Nawratil}
\address{
G. Nawratil \\
Institute of Discrete Mathematics and Geometry\\
Vienna University of Technology \\
Wiedner Hauptstra\ss e 8-10/104 \\
1040 Vienna, AT.}
\email{nawratil@geometrie.tuwien.ac.at}
\author{J. Schicho}
\address{
J. Schicho \\
Research Institute for Symbolic Computation \\
Johannes Kepler University \\
Altenberger Stra\ss e 69 \\
4040 Linz, AT.}
\email{josef.schicho@risc.jku.at}
\author{J.M. Selig}
\address{
J.M. Selig \\
School of Engineering\\
London South Bank University\\
London SE1 0AA, U.K. 
}
\email{seligjm@lsbu.ac.uk}
\begin{document}

\keywords{icosapods, line-symmetric motion, body-bar framework, spectrahedra}

\subjclass{14L35, 70B15, 14P10}

\begin{abstract}
  Pods are mechanical devices constituted of two rigid bodies, the base and 
the platform, connected by a number of other rigid bodies, called legs, that are 
anchored via spherical joints. It is possible to prove that the maximal number 
of legs of a mobile pod, when finite, is~$20$. In $1904$, Borel designed a 
technique to construct examples of such $20$-pods, but could not constrain the 
legs to have base and platform points with real coordinates. We show that 
Borel's construction yields all mobile $20$-pods, and that it 
is possible to construct examples with all real coordinates.
\end{abstract}

\maketitle

\section*{Introduction}

A multipod is a mechanical linkage consisting of two rigid bodies, called the 
{\em base} and the {\em platform}, and a number of rigid bodies, called {\em 
legs}, connecting them. Each leg is attached to base and platform with spherical 
joints (see Figure~\ref{fig0}), so platform points are constrained to 
lie on spheres --- the center of each sphere is then the base point connected to 
the respective leg. If the platform can move respecting the constraints imposed 
by the legs we say that the multipod is {\em mobile}. 

Note that multipods are also studied within {\it Rigidity Theory} as
so-called {\em body-bar frameworks} \cite{White1987}, as two rigid bodies 
(platform and base) are connected by multiple bars (legs). Mobile multipods 
correspond to flexible body-bar frameworks, whose study is of great practical
interest for e.g.\ protein folding \cite{Schulze2014}.

We can model the possible configurations of a multipod using direct isometries 
of~$\R^3$, by associating to every 
configuration the isometry mapping it to a fixed configuration. At this 
point one can consider a {\em motion}, namely a one-dimensional set of direct 
isometries, and try to construct a multipod moving according to the fixed 
motion. This approach has a long history; there are motions allowing multipods 
with infinitely many legs, but among those that allow only a finite number of 
legs, the maximal number is~$20$. This was proved by Schoenflies, see 
Remark~\ref{rem:schoenflies}. Borel proposed a construction for icosapods 
(namely multipods with $20$ legs) leading to linkages whose motion is 
line-symmetric, i.e.\ whose elements are involutions, namely rotations 
by~$180^{\circ}$ around a line (see Figure~\ref{fig0}).

\begin{figure}[!ht] 
\begin{center}
 \begin{overpic}
    [width=120mm]{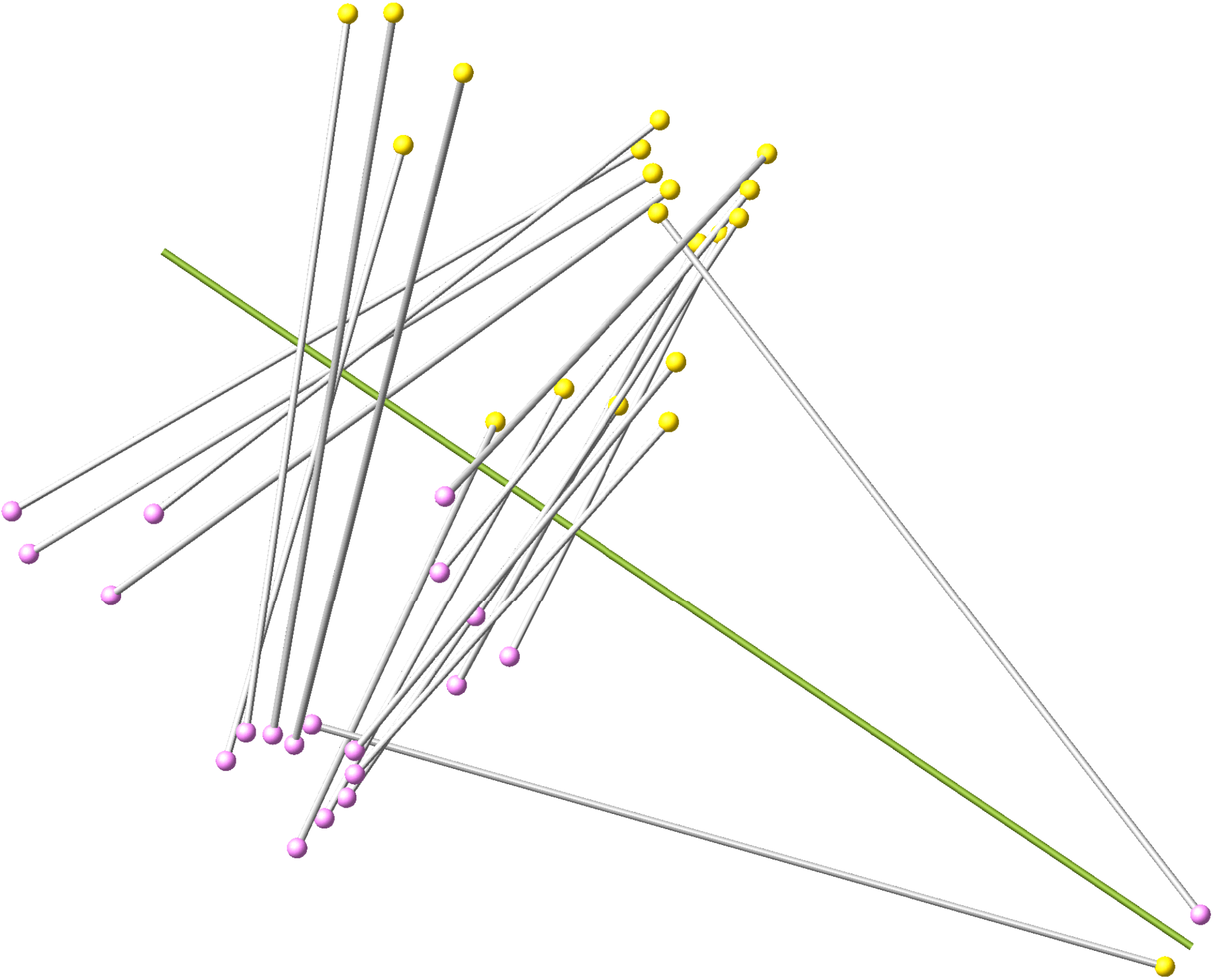}
\begin{small}
\put(11.5,60){$\ell$}
\end{small}
  \end{overpic} 
\end{center}
\caption{Base points (pink) and platform points (yellow) of an icosapod. 
Notice that for every leg there is a symmetric one obtained by rotating the 
first one by~$180^{\circ}$ around the axis~$\ell$. The symmetry reverses the 
role of base and platform points.
}
  \label{fig0}
\end{figure}

This paper provides two results on icosapods. First, we show that all mobile 
icosapods (with very mild restrictions) are instances of Borel's construction. 
Second, we exhibit a mobile icosapod that is an instance of Borel's 
construction (Borel, in fact, obtained equations for the base and platform 
points of a mobile icosapod, but could not prove the existence of solutions 
in~$\R^3$). The second part is closely related to the theory of quartic 
spectrahedra, which has been studied in~\cite{Ottem2015}.

Section~\ref{review} provides an historical overview of line-symmetric motions. 
In Section~\ref{duality} we set up the formalism and the objects that are 
needed for our approach to the problem; in particular we recall a 
compactification of the group of direct isometries that has already been used 
by the authors to deal with problems on multipods, and we show how the 
constraints imposed by legs can be interpreted as a duality between the space 
of legs and the space of direct isometries. In Section~\ref{completeness} we 
use these tools to prove that, under certain generality conditions, mobile 
icosapods admit line-symmetric motions. In Section~\ref{construction} we show 
how it is possible to construct example of mobile icosapods employing results 
in the theory of quartic spectrahedra.


\section{Review on line-symmetric motions}
\label{review}

Krames~\cite{kramesI} studied special one-parametric motions, obtained by 
reflecting the moving system~$\varsigma$ in the generators of a ruled surface in 
the fixed system~$\Sigma$. This ruled surface is called the {\it base surface} 
for the motion. He showed some remarkable properties of these motions 
(see~\cite{kramesI}), which led him to name them {\it Symmetrische Schrotung} 
(in German). This name was translated to English as {\it symmetric motion} by 
T\"olke~\cite{toelke}, {\it Krames motion} or {\it line-symmetric motion} by 
Bottema and Roth~\cite[page~319]{bottroth}. As each {\it Symmetrische Schrotung} 
has the additional property that it is equal to its inverse motion (cf.\ 
Krames~\cite[page~415]{kramesII}), it could also be called {\it involutory 
motion}. In this paper we use the name {\it line-symmetry} as it is probably the 
most commonly used term in today's kinematic community

Further characterizations of line-symmetric motions (beside the cited one of 
Krames~\cite{kramesI}) where given by T\"olke~\cite{toelke}, Bottema and 
Roth~\cite[Chapter~9, \S~7]{bottroth}, Selig and Husty~\cite{selighusty} and 
Hamann~\cite{marco}. If one uses the so-called Study parameters 
$(e_0:e_1:e_2:e_3:f_0:f_1:f_2:f_3)$ to describe isometries, then it is possible 
to characterize line-symmetric motions algebraically in the following 
way. Given such a motion, there always exist a Cartesian system of coordinates, 
or \emph{frame} $(o;x,y,z)$ for the moving 
system $\varsigma$ and a Cartesian frame $(O;X,Y,Z)$ for the fixed 
system~$\Sigma$ 
so that $e_0=f_0=0$ holds for the elements of the motion. With this choice 
of coordinates, the latter are rotations by~$180^{\circ}$ around lines; the 
Study coordinates $(e_1:e_2:e_3:f_1:f_2:f_3)$ of these isometries coincide with 
the Pl\"ucker coordinates of the lines.

\subsection{Historical results on line-symmetric motions with spherical 
paths}\label{review:hist}

In this paper we study line-symmetric motions that are solutions to the 
still unsolved problem posed by the French Academy of Science for the {\it Prix 
Vaillant} of the year~$1904$ (cf.~\cite{husty_bb}): {\it "Determine and study 
all displacements of a rigid body in which distinct points of the body move on 
spherical paths."} Borel and Bricard were awarded the prize for their 
papers~\cite{Borel} and~\cite{Bricard} containing partial solutions, and 
therefore this is also known as the {\it Borel Bricard (BB) problem}. 

\subsubsection{Krames's results}

Krames~\cite{kramesII,kramesV,kramesVI} studied some special motions already 
known to Borel and Bricard in more detail and stated the following theorem 
 \cite[Satz 6]{kramesII}:

\begin{theorem}\label{krames}
For each line-symmetric motion, that contains discrete, $1$ or 
$2$-dimensional spherical paths, the set~$\mathfrak{f}$ of points 
with spherical trajectories is congruent (direct isometry) to the 
set~$\mathfrak{F}$ of corresponding sphere centers. 
\end{theorem}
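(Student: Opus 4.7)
The plan is to exploit the algebraic characterization of line-symmetric motions stated just before the theorem: one can choose Cartesian frames $(o;x,y,z)$ in $\varsigma$ and $(O;X,Y,Z)$ in $\Sigma$ so that every element $\mu(t)$ of the motion has Study parameters satisfying $e_0 = f_0 = 0$. In such frames, the coordinate-to-coordinate transformation $T(t)\colon \R^3 \to \R^3$ associated with $\mu(t)$ is an involutive isometry, namely a half-turn about a line $\ell(t)$. Let $\varphi\colon \varsigma \to \Sigma$ denote the direct isometry that sends each point of $\varsigma$ to the point of $\Sigma$ with identical coordinates in the chosen frames.

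The core step is a duality argument. Assume $p \in \mathfrak{f}$ has spherical trajectory with center $P \in \mathfrak{F}$ and radius $r$, and let $\tilde p, \tilde P \in \R^3$ be the corresponding coordinate vectors. The condition $\|\mu(t)(p) - P\| = r$ then translates to $\|T(t)(\tilde p) - \tilde P\| = r$ for every $t$. Since $T(t)$ is an isometry of $\R^3$ and $T(t)^{-1} = T(t)$,
\[
r = \|T(t)(\tilde p) - \tilde P\| = \|T(t)^{-1}(T(t)(\tilde p)) - T(t)^{-1}(\tilde P)\| = \|\tilde p - T(t)(\tilde P)\|,
\]
so the moving point $\varphi^{-1}(P) \in \varsigma$ (whose coordinates are $\tilde P$) has a spherical trajectory of radius $r$ centered at $\varphi(p) \in \Sigma$. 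Therefore $\varphi^{-1}(P) \in \mathfrak{f}$ and $\varphi(p) \in \mathfrak{F}$; this shows $\varphi(\mathfrak{f}) \subseteq \mathfrak{F}$, and the symmetric argument, obtained by swapping the roles of $p$ and $P$, gives the reverse inclusion. Hence $\varphi$ restricts to a bijection $\mathfrak{f} \to \mathfrak{F}$, realizing the claimed direct-isometric congruence.

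The dimension hypothesis (discrete, $1$- or $2$-dimensional spherical paths) serves to exclude degenerate situations in which $\mathfrak{f}$ coincides with the entire moving body, for instance motions consisting of half-turns about a pencil of lines through a common point. In those excluded cases the center of the sphere is not uniquely determined by the moving point, so $\mathfrak{F}$ is ill-defined and the duality above collapses.

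The main obstacle is conceptual rather than technical: one must carefully distinguish between the moving system $\varsigma$ and the fixed system $\Sigma$ while simultaneously identifying them via the chosen Cartesian frames, in order to give meaning to the identity $T(t)^{-1} = T(t)$. Once this bookkeeping is in place, the theorem follows from the one-line isometric manipulation displayed above.
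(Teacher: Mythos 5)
The paper does not actually prove Theorem~\ref{krames}: it is quoted as a historical result of Krames (Satz~6 of \emph{Zur Geometrie symmetrischer Schrotungen}), so there is no in-paper proof to compare against. Your argument, however, is correct, and it is precisely the mechanism the paper itself invokes later (Section~\ref{duality:involutions}) when it observes that for an involution $\sigma$ the condition $\left\|\sigma(a)-b\right\|=d$ implies $\left\|\sigma(b)-a\right\|=d$, so that base and platform points can be swapped ``for free''. Concretely: after normalizing to $e_0=f_0=0$ each element $T(t)$ is an involutive isometry, and the identity $\left\|T(t)\tilde p-\tilde P\right\|=\left\|\tilde p-T(t)\tilde P\right\|$ shows that the coordinate-identification $\varphi$ maps $\mathfrak{f}$ onto $\mathfrak{F}$ and back, which is exactly the claimed direct congruence (Bricard's ``\'egales et semblablement plac\'ees par rapport aux deux tri\`edres''). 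Two small remarks. First, your appeal to the normal form $e_0=f_0=0$ is legitimate here, since the paper states this characterization of line-symmetric motions immediately before the theorem. Second, your explanation of the dimension hypothesis is not quite right: your duality argument never uses it, and $\mathfrak{F}$, understood as the set of \emph{all} centers of \emph{all} spheres carrying some trajectory, is well defined even when a trajectory lies on infinitely many spheres; the hypothesis in Krames's formulation merely excludes the $3$-dimensional (Bricard) case in which every point of the body moves on a sphere, a case the paper lists separately among the open/known classifications at the end of Section~\ref{construction}. This is a cosmetic issue, not a gap.
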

 
Moreover Krames noted in~\cite[page 409]{kramesII} that\footnote{The 
following extract as well as Theorem~\ref{krames} has been translated from 
the original German by the authors.} {\it "\ldots most 
of the solutions given by Borel and Bricard are line-symmetric motions. 
In each of these motions both geometers detected this circumstance 
by other means, without using the above mentioned result" 
(Theorem~\ref{krames}).} In the following we will take a closer look 
at the papers~\cite{Borel,Bricard}, which shows that the latter 
statement is not entirely correct.

\subsubsection{Bricard's results} 

Bricard studied these motions in~\cite[Chapitre VIII]{Bricard}. His 
first result in this context (see end of~\cite[\S~32, page~70]{Bricard}) 
reads as follows (adapted to our notation):

{\it Dans toutes les solutions auxquelles on sera conduit, les figures 
li\'{e}es $\mathfrak{F}$ et $\mathfrak{f}$ seront \'{e}videmment 
\'{e}gales et semblablement plac\'{e}es par rapport aux deux tri\`{e}dres 
$(O;X,Y,Z)$ et $(o;x,y,z)$.}

In the remainder of~\cite[Chapitre VIII]{Bricard} he discussed some 
special cases, which also yield remarkable results, but he did not give further 
information on the general case.

\subsubsection{Borel's results} 

Borel discussed in~\cite[Case~Fb]{Borel} exactly the case $e_0 = f_0 = 0$ 
and he proved in~\cite[Case~Fb1]{Borel} that in general a set of~$20$ 
points are located on spherical paths but without giving any result on 
the reality of the $20$ points. Moreover he studied two special cases 
in Fb2 and Fb3.

Borel did not mention the geometric meaning of the assumption 
\mbox{$e_0=f_0=0$}. He 
only stated at the beginning of case F~\cite[page~95]{Borel} that the moving frame 
$(o;x,y,z)$ is parallel to the frame 
obtained by a reflection of the fixed frame $(O;X,Y,Z)$ in a line. This 
corresponds 
to the weaker assumption $e_0=0$. He added that 
this implies the same consequences as already mentioned in~\cite[page~47, case 
C]{Borel}, which reads as follows (adapted to our notation): 

{\it \ldots dans le cas o\`{u} les tri\`{e}dres sont sym\'{e}triques par 
rapport \`{a} une droite, si deux courbes sont repr\'{e}sent\'{e}es par 
des \'{e}quations identiques, l'une en $X,Y,Z$, l'autre en 
$x,y,z$, elles sont sym\'{e}triques par 
rapport \`{a} cette droite.}

But Borel did not mention, neither in case Fb1 nor in his conclusion section, 
that $\mathfrak{f}$ with $\# \mathfrak{f} = 20$ is congruent 
to~$\mathfrak{F}$ (contrary to other special cases e.g.\ Fb3, where the 
congruence property is mentioned explicitly.)

\subsection{Review of line-symmetric self-motions of hexapods} 

We denote the platform points of the $i$-th leg in the moving 
system~$\varsigma$ by~$p_i$ and its corresponding base 
points in the fixed system~$\Sigma$ by~$P_i$.

For a generic choice of the geometry of the platform and the base as well as 
the leg lengths~$d_i$ the hexapod can have up to~$40$ configurations. 
Under certain conditions it can also happen that the direct kinematic problem 
has no discrete solution set but an $n$-dimensional one with $n>0$. Clearly 
these so-called self-motions of hexapods are solutions to the BB problem. 

In practice hexapods appear in the form of Stewart-Gough manipulators, 
which are $6$ degrees of freedom parallel robots. In these machines the leg 
lengths can be actively changed by prismatic joints and all spherical joints 
are passive. 

Moreover a hexapod (resp.\ Stewart-Gough manipulator) is called \emph{planar} if 
the points $p_1, \ldots, p_6$ are coplanar and also the points 
$P_1, \ldots, P_6$ are coplanar; otherwise it is called \emph{non-planar}. In 
the following we review those papers where line-symmetric self-motions of 
hexapods are reported.

\subsubsection{Non-planar hexapods with line-symmetric self-motions}

Line-symmetric motions with spherical paths already known to Borel~\cite{Borel} 
and Bricard~\cite{Bricard} (and also discussed by Krames 
in~\cite{kramesII,kramesVI}) were used by Husty and Zsombor-Murray~\cite{hzm} 
and Hartmann~\cite{hartmann} to construct examples of (planar and non-planar) 
hexapods with line-symmetric self-motions.

Point-symmetric hexapods with congruent platform and base possessing 
line-symmetric self-motions were given in~\cite[Theorem 11]{nawratil_ps}. 
Further non-planar hexapods with line-symmetric self-motions can be 
constructed from overconstrained pentapods with a linear platform 
\cite{nawratil_linpent}.

\subsubsection{Planar hexapods with line-symmetric 
self-motions}\label{review:planar}

All self-motions of the original Stewart-Gough manipulator were classified by 
Karger and Husty~\cite{kargerhusty}. Amongst others they reported a self-motion 
with the property $e_0=0$ (see \cite[page~$208$, last paragraph]{kargerhusty}), 
{\it "which has the property that all points of a cubic curve lying in the plane 
$\ldots$ and six additional points out of this plane have spherical 
trajectories. This seems to be a new case of a BB motion, not known so far."} 
Based on this result, Karger~\cite{karger1,karger2} presented a procedure for 
computing further {\it "new self-motions of parallel manipulators"} of the type 
$e_0=0$, where the points of a planar cubic~$\mathfrak{c}$ have spherical paths.

Another approach was taken by Nawratil in his series of papers 
\cite{nawratil_types,nawratil_basic,nawratil_nec,nawratil_jg}, 
by determining the necessary and sufficient geometric conditions for 
the existence of a $2$-dimensional motion such that three points in the 
$xy$-plane of~$\varsigma$ move on three planes 
orthogonal to the $XY$-plane of~$\Sigma$ (3-fold Darboux condition) and 
two planes orthogonal to the $xy$-plane of~$\varsigma$ 
slide through two fixed points located in the 
$XY$-plane of~$\Sigma$ (2-fold Mannheim condition). It turned out that 
all these so-called type II Darboux-Mannheim motions are line-symmetric. 
Moreover a geometric construction of a 12-parametric set of planar 
Stewart-Gough platforms (cf.~\cite[Corollary~5.4]{nawratil_jg}) with 
line-symmetric self-motions was given. It was also shown that the algorithm 
proposed by Karger in~\cite{karger1,karger2} yields these solutions. 

While studying the classic papers of Borel and Bricard for this historical 
review we noticed that the solution set of the BB problem mentioned in the last 
two paragraphs was already known to these two French geometers; 
cf.~\cite[Case~Fb3]{Borel} and~\cite[Chapter~V]{Bricard} (already reported by 
Bricard in~\cite[page~21]{Bricard_extra}). But in contrast to the above listed 
approaches (of Karger and Nawratil) both of them assumed that the motion with 
spherical trajectories is line-symmetric. Each of them additionally discovered 
one more property:
\begin{enumerate}[$\bullet$]
\item
Borel pointed out that there exist a further $8$ points (all~$8$ can be real) 
with spherical trajectories. This set of points splits in four pairs, 
which are symmetric with respect to the carrier plane of the 
cubic~$\mathfrak{c}$. 
\item
Bricard showed the following: If we identify the congruent planar cubics 
of the platform and the base, i.e.\ $\mathfrak{c}=\mathfrak{C}$, then the 
tangents in 
a corresponding point pair~$P$ and~$p$ with respect to 
$\mathfrak{c}=\mathfrak{C}$ intersect each other in a point of the cubic 
$\mathfrak{c}=\mathfrak{C}$ 
($P$ and $p$ form a so-called Steinerian couple).
\end{enumerate}

Bricard communicated his result (published in~\cite[page~$21$]{Bricard_extra}) 
to Duporcq, who gave an alternative reasoning in~\cite{duporcq}, which sank into 
oblivion over the past $100$ years. Only a footnote in the conclusion section of 
Borel's work~\cite{Borel} points to Duporcq's proof (but not to the original 
work of Bricard~\cite{Bricard_extra}), which is based on the following 
remarkable motion (see Figure~\ref{fig:complete_quadrilaterals}):

{\it Let $P_1, \ldots, P_6$ and $p_1, \ldots, p_6$ be the 
vertices of two complete quadrilaterals, which are congruent. Moreover the 
vertices are labelled in a way that $p_i$ is the opposite vertex of 
$P_i$ for $i \in \{1, \dotsc, 6\}$.
Then there exist a two-parametric line-symmetric motion where 
each~$p_i$ moves on a sphere centered at~$P_i$.}

\begin{figure}[top] 
\begin{center}
  \begin{overpic}[width=50mm]{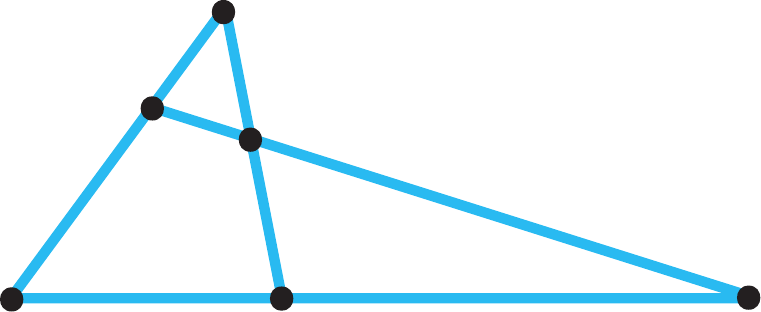}
    \begin{small}
      \put(7,5){$P_1$}
      \put(39,5){$P_2$}
      \put(96,6){$P_3$}
      \put(11.5,27){$P_5$}
      \put(35,25){$P_4$}
      \put(32,38){$P_6$}
    \end{small}
  \end{overpic} 
  \qquad
  \begin{overpic}[width=50mm]{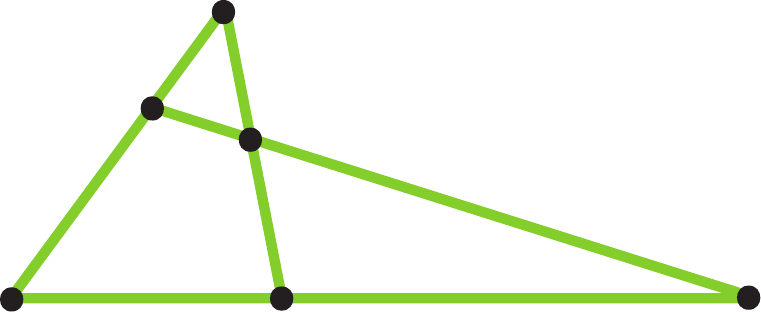}
    \begin{small}
      \put(7,5){$p_4$}
      \put(39,5){$p_5$}
      \put(96,6){$p_6$}
      \put(11.5,27){$p_2$}
      \put(35,25){$p_1$}
      \put(32,38){$p_3$}
    \end{small}
  \end{overpic} 
\end{center}
\caption{Illustration of Duporcq's complete quadrilaterals.
}
  \label{fig:complete_quadrilaterals}
\end{figure}

It can be easily checked that this configuration of base and platform points 
corresponds to an architecturally singular hexapod (e.g.~\cite{roschel} 
or~\cite{karger_arch}). As architecturally singular manipulators are redundant 
we can remove any leg --- w.l.o.g.\ we suppose that this is the sixth leg --- 
without changing the direct kinematics of the mechanism. Therefore the resulting 
pentapod $P_1, \ldots, P_5$ and $p_1, \ldots, p_5$ also has a 
two-parameter, line-symmetric, self-motion. 

Note that this pentapod yields a counter-example to Theorem~$4.2$ 
of~\cite{GalletNawratilSchicho2} and as a consequence also the 
work~\cite{asme_ns} is incomplete as it is based on this theorem. For the 
erratum to~\cite{GalletNawratilSchicho2} please see~\cite{erratum} and 
the addendum to~\cite{asme_ns} is given in~\cite{NawratilSchicho}.

\begin{remark}
If we assume for this pentapod that the line~$[P_1P_2]$ is the ideal 
line of the fixed plane (so $[p_4p_5]$ is 
the ideal line of the moving plane) then we get exactly the conditions 
found in~\cite{nawratil_types}, in which the points and ideal points 
of the plane's normals must fulfill in order to get a type II 
Darboux-Mannheim motion. Note that $P_2$, $P_3$ can also be complex conjugates 
(so $p_5$, $p_6$ are complex conjugate too). 
\end{remark}

\subsubsection{Computer search for mobile hexapods}
In \cite{schreyer}, Geiss and Schreyer describe a rather non-standard way
to find mobile hexapods. They set up an algebraic system of equations
equivalent to mobility, and then try random candidates with coordinates
in a finite field of small size with a computer. After collecting statistical
data indicating the existence of a family of real mobile hexapods, they
try a (computationally more expensive) lifting process in the most promising
cases. The method is extremely powerful and could still be used for finding
new families of mobile hexapods. However, the family reported in~\cite{schreyer}
can be seen as an instance of Borel's family Fb1. The line symmetry is not
apparent because the method starts by guessing 6 legs, which may not form
a line symmetric configuration; only if one adds the remaining 14 legs, or
at least all real legs among them, one obtains a symmetric configuration.
The question posed in Problem~$5$ in~\cite{schreyer} is easily answered from 
this viewpoint: two legs have the same lengths because they are conjugated by
line symmetry.

\section{Isometries, legs and bond theory}
\label{duality}

This section illustrates the concepts and techniques that will be needed to 
carry out our analysis. From now on, by an {\em $n$-pod} we mean a triple 
$(\vec{p}, \vec{P}, \vec{d})$ where $\vec{p}, \vec{P}$ are $n$-tuples of 
vectors in~$\R^3$ (respectively, platform and base points), and $\vec{d}$ is an 
$n$-tuple of positive real numbers (representing leg lengths). We describe the 
set of admissible configurations of a given pod~$\Pi$ as the set of all 
$\sigma \in \SE$ (where $\SE$ denotes the group of direct isometries from $\R^3$ 
to~$\R^3$) such that
\begin{equation*}
  \left\| \sigma(p_i) - P_i \right\| \; = \; d_i \quad 
\mathrm{for\ all\ } i \in \{ 1, \dotsc, n \}.
\end{equation*}
Multipods with $20$ legs are called \emph{icosapods}. 

\begin{remark}
  The description of admissible configurations that we adopt allows  
independent choices for two coordinate systems, one for the base and one for 
the platform: this is why also the leg lengths have to be included in the 
definition of a pod. Moreover, we do not require that the identity belongs 
to the admissible configurations, namely we do not fix any initial position of 
the pod.
\end{remark}

\subsection{Isometries and leg space}
\label{duality:isometries}

Throughout this paper we use a compactification of the group~$\SE$ of 
direct isometries that was introduced in~\cite{Selig2013} and 
in~\cite{GalletNawratilSchicho2014}. We briefly recall the construction. A 
direct isometry of~$\R^3$ can be described by a pair~$(M,y)$, where $M$ is an 
orthogonal matrix with~$\text{det}(M) = 1$ and $y$ is the image of the origin 
under the isometry. We define $x := -M^t y = -M^{-1} y$ and $r := \langle x,x 
\rangle = \langle y,y \rangle$, where $\langle \cdot, \cdot \rangle$ is the 
Euclidean scalar product. In this way, if we take coordinates $m_{11}, \dotsc, 
m_{33}$, $x_1, x_2, x_3$, $y_1, y_2, y_3$ and~$r$, together with a homogenizing 
variable~$h$, in~$\p^{16}_{{}_\C}$, then 
a direct isometry defines a point in projective space satisfying $h \neq 0$ and
\begin{equation}
\label{eq:compactification}
  \begin{gathered}
    M M^t \; = \; M^t M \; = \; h^2 \cdot \mathrm{id}_{\R^3}, \quad \det(M) 
    \; = \; h^3, \\
    M^t y + h x \; = \; 0, \quad M x + h y \; = \; 0 , \\
    \langle x,x \rangle \; = \; \langle y,y \rangle \; = \; r \tth h. 
  \end{gathered}
\end{equation}
Equations~\eqref{eq:compactification} define a variety~$X$ 
in~$\p^{16}_{{}_\C}$, whose real points satisfying $h \neq 0$ are in one to one 
correspondence with the elements of~$\SE$. A direct (computer aided) 
calculation shows that $X$ is a variety of dimension~$6$ and degree~$40$, and 
its saturated ideal is generated by quadrics. The main feature of this choice of 
coordinates is that, if $a, b \in \R^3$ and $d \geq 0$, then the condition 
$\left\| \sigma(a) - b \right\| = d$ on a direct isometry~$\sigma$ becomes 
linear, and in particular has the following form:
\begin{equation} 
\label{eq:sphere_condition}
  \bigl( \langle a, a \rangle + \langle b, b \rangle - d^2 \bigr) 
\tth h + r - 2 \tth \langle a, x \rangle - 2 \tth \langle y, b \rangle - 2 
\tth \langle M a, b \rangle \; = \; 0.
\end{equation}
We will usually refer to Equation~\eqref{eq:sphere_condition} as the 
\emph{sphere condition} imposed by $(a,b,d)$. Given an $n$-pod $\Pi = ( \vec{p}, 
\vec{P}, \vec{d} )$, we can consider its {\em configuration space}, namely the 
set of direct isometries~$\sigma$ in~$\SE$ such that $\left\| \sigma(a_i) - b_i 
\right\| = d_i$ for all $i \in \{1, \dotsc, n\}$. The fact that 
Equation~\eqref{eq:sphere_condition} is linear in the coordinates 
of~$\p^{16}_{{}_\C}$ means that the configuration space of~$\Pi$ can be 
compactified as the intersection of~$X$ with the linear space  
determined by the $n$ conditions imposed by its legs; we denote such variety 
by~$K_{\Pi}$. We say that the pod~$\Pi$ is {\em mobile} if the intersection 
$\SE \cap K_{\Pi}$ has (real) dimension greater than or equal to one. Notice 
that if $\Pi$ is mobile, then $K_{\Pi}$ has (complex) dimension greater than or 
equal to one.

\begin{notation}
  From now on, by the expression \emph{mobile icosapod} we mean a 
$20$-pod with mobility one to which we cannot add a leg without losing mobility.
\end{notation}

For our purposes, it is useful to introduce another $16$-dimensional projective 
space, playing the role of a dual space of the one where $X$ lives, where the 
duality is given by a bilinear version of Equation~\eqref{eq:sphere_condition}. 
We start by introducing a new quantity, called \emph{corrected leg 
length}, defined as $l := \langle a, a \rangle + \langle b, b \rangle - d^2$, 
so that Equation~\eqref{eq:sphere_condition} becomes
\[
  l \tth h + r - 2 \tth \langle a, x \rangle - 2 \tth \langle y, b \rangle - 2 
\tth \langle M a, b \rangle \; = \; 0.
\]
We think of the points $a = (a_1, a_2, a_3)$ and $b = (b_1, b_2, b_3)$ 
as points in~$\p^3_{{}_{\C}}$ by introducing two extra 
homogenization coordinates~$a_0$ and~$b_0$. In this way, the pair $(a,b)$ can 
be considered as a point in the Segre variety~$\Sigma_{3,3} \cong 
\p^3_{{}_\C} \times \p^3_{{}_\C}$; the latter is embedded 
in~$\p^{15}_{{}_{\C}}$, where we take coordinates~$\{ z_{ij} \}$ so that the 
points of~$\Sigma_{3,3}$ satisfy $z_{ij} = a_i \tth b_j$ for some 
$(a_0:a_1:a_2:a_3), (b_0:b_1:b_2:b_3) \in \p^3_{{}_{\C}}$. If we homogenize 
Equation~\eqref{eq:sphere_condition} with respect to the coordinates~$\{ z_{ij} 
\}$ and~$l$, then we get
\begin{equation}
\begin{multlined}[c][.9\displaywidth]
\label{eq:bilinear_sphere_condition}
  l \tth h + z_{00} r - 2 \tth ( z_{10} x_1 + z_{20} x_2 + z_{30} x_3 ) - \\ 
- 2 \tth (z_{01} y_1 + z_{02} y_2 + z_{03} y_3 ) - 2 \sum_{i, j = 1}^{3} 
m_{ij} \tth z_{ij} \; = \; 0.
\end{multlined}
\end{equation}
Notice that the left hand side of 
Equation~\eqref{eq:bilinear_sphere_condition} is a bilinear expression 
in the coordinates $(h, M, x, y, r)$ and in the coordinates~$(z, 
l)$. We denote this expression by~$\BSC$ (for \emph{bilinear sphere 
condition}). Hence, if we denote by~$\check{\p}^{16}_{{}_{\C}}$ the projective 
space with coordinates $(z,l)$, then we obtain a duality 
between~$\p^{16}_{{}_{\C}}$ and~$\check{\p}^{16}_{{}_{\C}}$ sending a point 
$(h_0, M_0, x_0, y_0, r_0) \in \p^{16}_{{}_{\C}}$ to the hyperplane 
in~$\check{\p}^{16}_{{}_{\C}}$ of equation $\BSC(h_0, M_0, x_0, y_0, r_0, z, 
l) = 0$, and a point $(z_0, l_0) \in \check{\p}^{16}_{{}_{\C}}$ to the 
hyperplane in~$\p^{16}_{{}_{\C}}$ of equation $\BSC(h, M, x, y, r, z_0, l_0) 
= 0$. 

\begin{remark}
\label{remark:cone_sphere_condition}
Suppose that $(z_0, l_0) \in \check{\p}^{16}_{{}_{\C}}$ belongs to the 
cone~$Y$ with vertex $(0: \dotsb : 0: 1)$ over the Segre variety~$\Sigma_{3,3}$ 
--- namely $(z_0)_{ij} = a_i \tth b_j$ for some $(a_0:a_1:a_2:a_3)$ and 
$(b_0:b_1:b_2:b_3)$. Suppose furthermore that $a_0 = b_0 = 1$ and $a_1^2 + 
a_2^2 + a_3^2 + b_1^2 + b_2^2 + b_3^2 - l_0 \geq 0$. Then from the definition 
of~$\BSC$ we see that the hyperplane $\BSC(h, M, x, y, r, z_0, l_0) = 0$ 
in~$\p^{16}_{{}_{\C}}$ is the same hyperplane defined by 
Equation~\eqref{eq:sphere_condition} where we take $a = (a_1, a_2, a_3)$, $b = 
(b_1, b_2, b_3)$ and $d = \sqrt{a_1^2 + a_2^2 + a_3^2 + b_1^2 + b_2^2 + b_3^2 - 
l}$.
\end{remark}

Remark~\ref{remark:cone_sphere_condition} indicates that the cone~$Y$ 
in~$\check{\p}^{16}_{{}_{\C}}$ plays a sort of dual role to the 
compactification~$X$ in~$\p^{16}_{{}_{\C}}$, and we will exploit this in our 
arguments. Since the Segre variety~$\Sigma_{3,3}$ has dimension~$6$ and 
degree~$20$, we obtain that $Y$ has dimension~$7$ and degree~$20$.

\begin{definition}
 Let $C \subseteq X$ be a curve. We define the {\em leg set} $L_C$ as the set of 
all points $(z,l) \in Y$ such that the~$\BSC$ --- evaluated at~$(z,l)$ --- 
holds for all points in~$C$. 
\end{definition}

\begin{remark}
 The leg set is a compactification of the set of all triples $(a,b,d) \in \R^3 
\times \R^3 \times \R_{\ge 0}$ such that the image of~$a$ under any point in~$C 
\cap \SE$ (considered as an isometry) has distance~$d$ from~$b$.
\end{remark}

\begin{proposition} 
\label{prop:leg_set_intersection}
Let $C \subseteq X$ be a curve. If $L_C$ has only finitely many complex points,
then its cardinality is at most~$20$. If $L_C$ has exactly $20$ points, then 
the linear span of~$L_C$ in~$\check{\p}^{16}_{{}_{\C}}$ is a projective subspace 
of dimension~$9$.
\end{proposition}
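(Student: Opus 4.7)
The plan is to identify $L_C$ as a linear section of $Y$, then use Bezout together with a projection argument. Since the bilinear form $\BSC$ is non-degenerate (each of the $17$ coordinates on one side pairs nontrivially with exactly one on the other, as can be read off its coefficient matrix), it induces a perfect duality between $\p^{16}_{{}_\C}$ and $\check\p^{16}_{{}_\C}$. Under this duality, a point $(z_0,l_0)\in\check\p^{16}_{{}_\C}$ belongs to $L_C$ iff its BSC-dual hyperplane in $\p^{16}_{{}_\C}$ contains $C$; consequently $L_C = V_C \cap Y$, where $V_C$ is the linear subspace annihilating $\spn(C)$ under this duality, of projective dimension $15 - \dim\spn(C)$.

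For the first statement, since $\dim Y = 7$, in order for $V_C \cap Y$ to be finite the Projective Dimension Theorem forces $\dim V_C \leq 9$. Then Bezout gives $|L_C| \leq \deg Y = 20$: directly when $\dim V_C = 9$, and, when $\dim V_C < 9$, by extending $V_C$ to a $9$-dimensional linear subspace $V_C'$ whose intersection with $Y$ is still proper. For the span statement, set $W = \spn(L_C)\subseteq V_C$; the chain of inclusions $L_C \subseteq W \cap Y \subseteq V_C \cap Y = L_C$ forces $W \cap Y = L_C$, so $|W \cap Y| = 20$, and the bound $\dim W \leq \dim V_C \leq 9$ is immediate.

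For the matching lower bound $\dim W \geq 9$, suppose $\dim W \leq 8$. By the projection formula, the restriction to $Y$ of the projection $\pi_W\colon \check\p^{16}_{{}_\C}\dashrightarrow \p^{15-\dim W}_{{}_\C}$ has degree $\deg Y - |W \cap Y| = 0$, hence is not dominant, and $Y$ lies in a proper cone with vertex $W$. I would then exploit that $Y$ is itself the cone over the Segre variety $\Sigma_{3,3}$ with vertex $v_0 = (0{:}\dots{:}0{:}1)$, distinguishing two cases. If $v_0 \in W$, then for any $p \in (W \cap Y) \setminus \{v_0\}$ the entire line through $v_0$ and $p$ lies in $W$ (since $W$ is linear) and in $Y$ (since $v_0$ is the vertex of $Y$), so $W \cap Y$ is either $\{v_0\}$ or positive-dimensional, contradicting $|W \cap Y| = 20$. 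If $v_0 \notin W$, projecting from $v_0$ identifies $W$ injectively with a linear subspace $L''\subseteq \p^{15}_{{}_\C}$ of the same dimension, and $|W \cap Y| = |L'' \cap \Sigma_{3,3}|$, so it remains to rule out that an $L''$ of dimension at most $8$ meets $\Sigma_{3,3}$ in $20$ distinct points.

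The main obstacle is this last claim. Although the ideal $I_2(\Sigma_{3,3})$ (generated by the $2\times 2$ minors of $(z_{ij})$) does contain quadrics of low rank, whose vertex linear spaces could a priori contain such an $L''$, a case-by-case dimension analysis of the intersection of $\Sigma_{3,3}$ with these vertices --- using the $\SL_4\times\SL_4$-orbit structure on $I_2(\Sigma_{3,3}) = \wedge^2\C^4\otimes\wedge^2\C^4$ to reduce to finitely many normal forms --- shows that the intersection is always either empty or strictly positive-dimensional, never a set of $20$ isolated reduced points. This contradiction yields $\dim W \geq 9$, completing the proof.
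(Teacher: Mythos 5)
Your handling of the first claim and of the upper bound $\dim \bigl(\spn(L_C)\bigr) \le 9$ is essentially the paper's argument (identify $L_C$ as a linear section of $Y$, apply the projective dimension theorem, and extend to a general $9$-space to invoke Bezout), and that part is fine. The problem is the lower bound. After a correct reduction --- the dichotomy on whether the vertex $v_0$ lies in $W$, and the passage to $\Sigma_{3,3}$ --- you arrive at the claim that no linear subspace $L''$ of dimension at most $8$ can meet $\Sigma_{3,3}$ in $20$ isolated points, and this claim is not proved: you assert that a case-by-case analysis via the $\mathrm{SL}_4\times\mathrm{SL}_4$-orbit structure on $\wedge^2\C^4\otimes\wedge^2\C^4$ ``shows'' it, but the analysis is not carried out, it is not explained why such an $L''$ would have to sit inside the vertex of a low-rank quadric in the ideal of $\Sigma_{3,3}$, and the proposed reduction to finitely many normal forms is dubious (the group acting has dimension $30$ while the space of quadrics has dimension $36$, so the orbit decomposition is not finite). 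Since this excess-intersection statement \emph{is} the mathematical content of the second assertion of the proposition, the proof is incomplete at its essential point. A secondary, repairable inaccuracy: the projection formula gives $\deg\bigl(\pi_W(Y)\bigr)\cdot\deg\bigl(\pi_W|_Y\bigr) = \deg Y - \sum_x e_x$ with local multiplicities $e_x \ge 1$, not $\deg Y - |W\cap Y|$ (a tangent line to a twisted cubic already violates your version); your conclusion that $\pi_W|_Y$ is not generically finite survives because the inequality goes the right way, but the formula as stated is false.

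The paper closes exactly this gap with a short commutative-algebra argument that you could substitute for the entire projection-to-$\Sigma_{3,3}$ reduction: since $Y$ is a cone over a Segre variety it is arithmetically Cohen--Macaulay, so the seven general linear forms cutting out a general $\p^9$-superspace $\Lambda$ of $\spn(L_C)$ form a regular sequence on the coordinate ring of $Y$; the Hilbert series of $\Lambda\cap Y$ is then obtained from that of $Y$ by dividing out $(1-t)^7$, and its degree-one coefficient equals $10$. Hence the $20$ points of $\Lambda \cap Y = L_C$ satisfy no linear form on $\Lambda$ beyond those defining $\Lambda$, i.e.\ they span all of $\p^9$, contradicting $\dim\bigl(\spn(L_C)\bigr) \le 8$. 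Until your final claim about $\Sigma_{3,3}$ is actually established (by this route or another), the second half of your proposal does not constitute a proof.
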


\begin{proof}
By construction $L_C$ is defined by linear equations as a subset of~$Y$; in 
other words $L_C = Y \cap \spn(L_C)$. Then the statement follows from 
general properties of linear sections of projective varieties. In fact, any 
linear subspace of codimension less than~$7$ intersects~$Y$ in a subvariety of 
positive dimension, hence $\dim \bigl( \spn(L_C) \bigr) \le 9$. A general 
linear subspace of dimension~$9$ intersects~$Y$ in $\deg(Y) = 20$ points.
In order to prove that $\dim \bigl( \spn(L_C) \bigr) = 9$ when $L_C$ has 
$20$ points we assume the contrary, that $\dim \bigl( \spn(L_C) \bigr) < 9$. Then 
we take a general linear superspace~$\Lambda$ of~$\spn(L_C)$ of dimension~$9$. 
It intersects~$Y$ again in~$20$ points, which must coincide with~$L_C$. On the 
other hand, if the Hilbert series of~$Y$ is~$\frac{P(t)}{(1-t)^{16}}$, then the 
Hilbert series of~$\Lambda \cap Y$ is~$\frac{P(t)}{(1-t)^{9}}$, but this 
contradicts the fact that $\Lambda \cap Y = L_C$ is contained in a linear space 
of dimension strictly smaller than~$9$.
\end{proof}

In order to show that the maximal number of~$20$ intersections, as described 
by Proposition~\ref{prop:leg_set_intersection}, can be achieved, we need to 
find a curve $C \subseteq X$ such that the bilinear sphere conditions of its 
points define a linear subspace in~$\check{\p}^{16}_{{}_{\C}}$ of dimension~$9$. 
This is equivalent to requiring $\dim \bigl(\spn(C) \bigr) = 15-9 = 6$. We will 
deal with this problem in Section~\ref{completeness}.

\begin{remark} \label{rem:schoenflies}
The proof of Proposition~\ref{prop:leg_set_intersection} gives also a simple 
alternative proof for the number of solutions (over~$\C$) of the spatial 
Burmester problem, which reads as follows: Given are seven poses 
$\varsigma_1,\ldots ,\varsigma_7$ of a moving 
system~$\varsigma$, determine all points of~$\varsigma$, that are 
located on a sphere in the given seven poses. 

The given poses correspond to seven points in~$X$, which span in the general 
case a~$\p^6_{{}_{\C}}$. Therefore its dual space is of dimension~$9$, and so 
intersects~$Y$ in exactly~$20$ points. 

This number was firstly computed by Schoenflies in~\cite[page~148]{schoenflies} 
and confirmed by Pimrose (see~\cite[footnote~3]{pimrose}). We conclude by 
observing that the first solution of the spatial Burmester problem using an 
approach based on polynomial systems was presented by 
Innocenti~\cite{innocenti}, who also gave an example with $20$ real solutions.
\end{remark} 
 
\subsection{Bond theory}
\label{duality:bond}

The second ingredient for our arguments is \emph{bond theory}, a technique for 
analyzing mobile multipods that has been introduced and developed (in the form 
we need here) in~\cite{GalletNawratilSchicho2014}. If $C$ is the configuration 
curve of a multipod~$\Pi$ of mobility one, then the \emph{bonds} of~$\Pi$ are 
defined as the intersections of~$C$ with the hyperplane $H = \bigl\{ h=0 
\bigr\}$. The set $B = H \cap X$, called the \emph{boundary}, has dimension~$5$ 
and is a real variety with only one real point; the latter can never occur as 
a bond, hence bonds always arise in complex conjugate pairs. The points 
of~$B$ do not represent isometries but they have geometric meaning: the bonds of 
a pod~$\Pi$ impose conditions on its legs. Five different types of boundary 
points can be distinguished:
\begin{description}
  \item[vertex] the only real point in~$B$; it is never contained in a 
configuration curve. 
  \item[inversion bonds] these correspond to inversion maps 
of orthogonal projections of base and platform; the~$\BSC$ of an inversion 
bond states that the projections of base point and platform point correspond 
via this inversion.
  \item[similarity bonds] these correspond to similarity maps of orthogonal 
projections of base and platform; the~$\BSC$ of an inversion 
bond states that the projections of base point and platform point correspond 
via this similarity.
  \item[butterfly bonds] these correspond to pairs of lines, one in the base 
and one in the platform; any leg satisfying the~$\BSC$ of a butterfly 
bond either has the base point on the base line or the platform point on the 
platform line.
  \item[collinearity bonds] these correspond to lines; if a multipod admits a 
colli- nearity bond, then either its platform points or its base points are 
collinear.
\end{description}

A mobile icosapod cannot have butterfly bonds (nor can it have collinearity 
bonds): this would imply that there are at least~$10$ legs with collinear base 
points or platform points, and then an infinite number of legs could be added 
without changing the mobility, which is disallowed by our definition of mobile 
icosapods.

The inversion bonds are smooth points of~$X$, and their tangent spaces are 
contained in~$H$. Consequently every motion passing through an inversion bond 
touches the hyperplane~$H$ tangentially at this bond. In particular, if a 
multipod~$\Pi$ of mobility one has no bonds other than the inversion bonds, the 
degree of its configuration curve~$C$ is twice the number of inversion bonds: 
the degree can be computed by intersecting with~$H$, and there we see only pairs 
of double intersections.

Consider the projection $\p^{16}_{{}_{\C}} \dashrightarrow \p^9_{{}_{\C}}$ 
keeping only the coordinates~$h$ and~$m_{ij}$ for $i,j \in \{1,2,3\}$. The open 
subset of the image of~$X$ defined by~$h \ne 0$ is the group variety~$\SO$, and 
the image itself is a subvariety~$X_{m}$ of degree~$8$ that is isomorphic to 
the Veronese embedding of~$\p^3_{{}_{\C}}$ by quadrics. 
The coordinates of~$\p^3_{{}_{\C}}$ are called \emph{Euler coordinates} and 
denoted by $e_0, e_1, e_2, e_3$. The center of the projection $\p^{16}_{{}_{\C}} 
\dashrightarrow \p^9_{{}_{\C}}$ intersects~$X$ in the union of the sets of 
similarity bonds, collinearity bonds and vertex. 

\subsection{The subvariety of involutions}
\label{duality:involutions}

We focus our attention on a particular subvariety of~$X$, the compactification 
of the set of involutions in~$\SE$. Involutions in~$\SE$ are rotations 
of~$180^{\circ}$ around a fixed axis, so their compactification --- which we 
will denote by~$X_{\inv}$ for reasons that will be clear later --- is a 
$4$-dimensional subvariety of~$X$, because the family of lines in~$\R^3$ is 
$4$-dimensional. One reason why involutions are particularly useful in the 
creation of mobile pods is that if $a$ and $b$ are a base and a platform point 
of a pod~$\Pi$, and $\sigma \in K_{\Pi}$ is an involution in the configuration 
space of~$\Pi$, then this means that $\left\| \sigma(a) - b \right\| = d$, where 
$d$ is the distance between $a$ and $b$; on the other hand, since $\sigma$ is an 
involution we have $\left\| \sigma(b) - a \right\| = d$. This means that if all 
isometries in the configuration space of~$\Pi$ are involutions, then we can swap 
the roles of base and platform points and obtain ``for free'' new legs not 
imposing any further restriction to the possible configurations of~$\Pi$.

If $\sigma = (M,y)$ is an involution, then $M = M^t$, that is to say $M$ is 
symmetric, and $y = x$. Hence we can consider the subvariety
\[
  \bigl\{ (h:M:x:y:r) \in X \, : \, M = M^t \text{ and } x = y \bigr\}.
\]
One verifies that such subvariety has two irreducible components, namely the 
isolated point corresponding to the identity and another one of dimension~$4$, 
which is cut out by a further linear equation, namely $m_{11} + m_{22} + 
m_{33} + h = 0$. 

\begin{definition}
  The subvariety of~$X$ defined by the equations $M = M^t$ and $x = y$ and 
$m_{11} + m_{22} + m_{33} + h = 0$ is denoted~$X_{\inv}$.
\end{definition}

\section{Mobile icosapods are line-symmetric}
\label{completeness}

In this section we will show that if an icosapod of mobility one admits 
an irreducible configuration curve, then its motion is line-symmetric 
(Theorem~\ref{thm:icosapod_line_symmetric}). We start by translating this 
concept in our formalism. Recall 
from~\cite[Section~2.2]{GalletNawratilSchicho2014} that the 
group~$\SE$ acts on its compactification~$X$: every isometry in~$\SE$ 
determines a projective automorphism of~$\p^{16}_{{}_{\C}}$ leaving~$X$ 
invariant. 

\begin{definition}
  Let $C \subseteq X$ be a curve. Then $C$ is called an {\em involutory motion} 
if $C \subseteq X_{\inv}$. The curve $C$ is called a {\em line-symmetric 
motion} if there exists an isometry~$\tau$ such that the automorphism 
associated to~$\tau$ maps~$C$ inside~$X_{\inv}$.
\end{definition}

From Proposition~\ref{prop:leg_set_intersection} we know that the configuration 
curve of an icosapod spans a linear subspace of dimension~$6$.
To get on overview of possible examples of irreducible curves~$C \subseteq X$ 
with $\dim \bigl( \spn(C) \bigr) = 6$, we consider the projection 
$\p^{16}_{{}_{\C}} \dashrightarrow \p^9_{{}_{\C}}$ described at the end of 
Subsection~\ref{duality:bond}. Let $C_m \subseteq X_{m}$ be the projection 
of~$C$, which can be either a point or a curve. It is possible to prove 
(see~\cite{Nawratil2014}) that if $C_m$ is a point, then there exists a multipod 
with infinitely many legs admitting $C$ as configuration set. Since we are 
interested in pods with finitely many legs, from now we suppose that  $C_m$ is a 
curve. Let $C_e \subseteq \p^3_{{}_{\C}}$ be its isomorphic preimage under the 
Veronese map. 

\begin{proposition} 
\label{proposition:planar_or_cubic}
If $C \subseteq X $ is an irreducible curve such that $\dim \bigl( \spn(C) 
\bigr) = 6$, then $C_e$ is either planar or a twisted cubic.
\end{proposition}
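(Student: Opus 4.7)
My plan is to convert the hypothesis $\dim \bigl( \spn(C) \bigr) = 6$ into a bound on the space of quadrics vanishing on $C_e$, and then classify such curves by a Bezout-type argument.

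First, I would use the coordinate projection $\p^{16}_{{}_{\C}} \dashrightarrow \p^9_{{}_{\C}}$ described at the end of Subsection~\ref{duality:bond}: being linear, it satisfies $\dim \bigl( \spn(C_m) \bigr) \leq \dim \bigl( \spn(C) \bigr) = 6$. Since $X_m$ is the Veronese embedding of $\p^3_{{}_{\C}}$ by quadrics, hyperplanes of $\p^9_{{}_{\C}}$ containing $C_m$ correspond bijectively to quadrics in $\p^3_{{}_{\C}}$ vanishing on $C_e$; hence $h^0 \bigl( \p^3_{{}_{\C}}, \mathcal{I}_{C_e}(2) \bigr) \geq 9 - 6 = 3$, i.e., $C_e$ lies in (at least) a net of quadrics.

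If $C_e$ lies in a plane there is nothing more to prove. Otherwise $C_e$ is irreducible and non-degenerate in $\p^3_{{}_{\C}}$. The net cannot have a common planar component, since such a component would be a plane containing the irreducible $C_e$, contradicting non-degeneracy; so I can pick two quadrics $Q_1, Q_2$ in the net meeting in a proper intersection. By Bezout, $Q_1 \cap Q_2$ is a curve of degree $4$, and therefore $\deg(C_e) \leq 4$. Since $C_e$ is non-degenerate and irreducible we have $\deg(C_e) \geq 3$, so the only remaining options are $\deg(C_e) = 3$ and $\deg(C_e) = 4$. The first case gives a non-degenerate irreducible cubic in $\p^3_{{}_{\C}}$, i.e., a twisted cubic, as desired.

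The main obstacle is to rule out $\deg(C_e) = 4$. In that case $C_e$ must equal $Q_1 \cap Q_2$ scheme-theoretically, and the Koszul resolution
\[
0 \longrightarrow \mathcal{O}_{\p^3_{{}_{\C}}}(-4) \longrightarrow \mathcal{O}_{\p^3_{{}_{\C}}}(-2)^{\oplus 2} \longrightarrow \mathcal{I}_{C_e} \longrightarrow 0
\]
of the complete intersection, twisted by $\mathcal{O}(2)$ and followed by taking global sections, yields $h^0 \bigl( \mathcal{I}_{C_e}(2) \bigr) = 2$. This contradicts the bound from the first step, so $\deg(C_e) = 4$ is impossible and the classification is complete.
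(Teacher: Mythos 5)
Your argument is essentially the paper's own proof: the paper likewise projects to $\p^9_{{}_{\C}}$, concludes that the ideal of $C_e$ contains at least three linearly independent quadrics, and then states that ``the proposition follows.'' Your Bezout and Koszul computation correctly supplies the classification step that the paper leaves implicit (one could streamline the choice of $Q_1,Q_2$ by noting that every quadric containing a non-degenerate irreducible curve is itself irreducible, so any two distinct members of the net already meet properly), so the proposal is correct and takes the same route.
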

\begin{proof}
The projection of~$\spn(C)$ in~$\p^9_{{}_{\C}}$ is a linear subspace of 
dimension at most~$6$. Hence the ideal of~$C_m$ contains at least~$3$ linear 
independent linear forms. Hence the ideal of~$C_e$ contains at least~$3$ linear
independent quadratic forms. The proposition then follows.
\end{proof}

From now on, since we aim for a result on mobile icosapods, we will consider 
curves~$C$ allowing exactly $20$ legs, that is satisfying the following 
condition:
\begin{equation}
\label{eq:condition}
\left\{
\begin{array}{l}
  C \text{ is an irreducible real curve with real points,} \\
  L_{C} \text{ consists of exactly } 20 \text{ real finite points.}
\end{array}
\right.
\tag{$\dagger$}
\end{equation}
Here by real ``finite'' points we mean that their $z_{00}$-coordinates are not 
zero; in other terms, such points determine pairs of base and platform points 
in~$\R^3$ (and not at infinity).

\begin{remark}
Notice that condition~\eqref{eq:condition} does not comprise every mechanical 
device that one could name a ``mobile icosapod'': there may exist a device 
admitting infinitely many complex points in its leg space, of which only $20$ 
are real and finite. Still, we believe that condition~\eqref{eq:condition} is a 
good compromise because it will allow a uniform treatment of the topic. 
\end{remark}

\begin{remark}
\label{remark:consequences_dagger}
Notice that condition~\eqref{eq:condition} implies that $\dim \bigl( \spn(C) 
\bigr) = 6$. Moreover, from Section~\ref{duality:bond} it follows that $C$ does 
not pass through any butterfly or collinearity point.
\end{remark}

\begin{lemma}
\label{lemma:no_twisted}
  Suppose that $C \subseteq X$ satisfies condition~\eqref{eq:condition}. Then 
$C_e$ cannot be a cubic (neither a twisted cubic, nor a plane cubic).
\end{lemma}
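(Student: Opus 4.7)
I argue by contradiction via bond theory, treating the twisted and plane cubic cases in parallel. Suppose $C_e \subseteq \p^3_{{}_\C}$ is a cubic. Then the Veronese image $C_m = v_2(C_e)$ has degree $6$ in $\p^9_{{}_\C}$, and under the Veronese identification the hyperplane $\{h = 0\} \subseteq \p^9_{{}_\C}$ corresponds to the absolute quadric $Q = \{e_0^2+e_1^2+e_2^2+e_3^2 = 0\} \subseteq \p^3_{{}_\C}$, so by Bezout $|C_e \cap Q| = 6$ counted with multiplicity.

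I count the bonds of $C$ (points of $C \cap H$) in two ways. By Remark~\ref{remark:consequences_dagger}, $C$ has no butterfly or collinearity bonds, so each bond is either an inversion bond (in $H \setminus Z$, where $Z$ is the center of the projection $\p^{16}_{{}_\C} \dashrightarrow \p^9_{{}_\C}$; contributing multiplicity $2$ to $C \cap H$ by tangentiality of $C$ to $H$) or a similarity bond (in $Z \cap X \subseteq H$, contributing multiplicity $1$ generically). Let $k$ be the number of inversion bonds, $s$ the number of similarity bonds, and $d = \deg(\pi|_C)$ where $\pi \colon C \dashrightarrow C_m$ is the projection. From intersecting with $H$ one obtains $\deg(C) = 2k + s$; pulling back a general hyperplane of $\p^9_{{}_\C}$, which always contains $Z$, gives $\deg(C) = s + 6d$. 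Combining yields $k = 3d$.

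The crucial remaining step is to show $d = 1$. In the twisted cubic case this is immediate: $\spn(C_m)$ is already a $\p^6$, and since $\spn(C)$ is also a $\p^6$ by Remark~\ref{remark:consequences_dagger}, the linear projection $\spn(C) \to \spn(C_m)$ is an isomorphism of $\p^6$'s, so $C \to C_m$ is birational. In the plane cubic case, $\spn(C_m)$ is only a $\p^5$ and $\spn(C) \cap Z$ consists of a single (necessarily real) point, which cannot be a similarity bond on $C$ since real bonds do not occur (the vertex being the unique real point of $B = H \cap X$ and never a bond), so $s = 0$. Ruling out $d \geq 2$ here is the main obstacle, and should follow from the irreducibility of $C$ together with condition~\eqref{eq:condition}: the latter forces $L_C$ to span exactly a $\p^9$ in $\check{\p}^{16}_{{}_\C}$, and a higher-degree cover of $C_m$ would correspond to multiple valid translations over each rotation in $C_m$, incompatible with the $20$ sphere conditions cutting out this specific $\p^9$.

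Once $d = 1$ is established, $k = 3$, which is odd. But bonds come in complex conjugate pairs (Section~\ref{duality:bond}), so $k$ must be even — a contradiction ruling out both cubic subcases.
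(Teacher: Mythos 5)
Your overall strategy --- eliminate butterfly and collinearity bonds via condition~\eqref{eq:condition}, eliminate similarity bonds by analysing how the centre of the projection to $\p^9_{{}_{\C}}$ meets $\spn(C)$, and then derive a parity contradiction from the count of inversion bonds --- is exactly the route the paper takes, and your treatment of the twisted cubic case is complete: there $\spn(C_m)$ is already a $\p^6$, so the projection restricted to $\spn(C)$ is a linear isomorphism, hence $d=1$ and $s=0$, and $k=3$ contradicts the fact that inversion bonds come in conjugate pairs. Your bookkeeping identity $k=3d$ is a slightly more systematic version of the paper's statement that $\deg(C)$ is twice the number of inversion bonds. Your argument that the unique point of $\spn(C)\cap Z$ in the plane cubic case is real and therefore cannot be a similarity bond is also the paper's argument.

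The genuine gap is the one you flag yourself: in the plane cubic case you do not prove $d=1$, and the justification you sketch does not work. That $L_C$ spans a $\p^9$ is equivalent to $\dim\spn(C)=6$, which you have already used to locate $\spn(C)\cap Z$; it carries no information about the degree of the map $C\to C_m$. Nor is ``several translations over each rotation'' in itself incompatible with anything: a $d{:}1$ cover would simply mean that over a generic rotation of $C_m$ the configuration curve contains $d$ isometries with that rotation part, and condition~\eqref{eq:condition} does not obviously forbid this. Without $d=1$ your conclusion $k=3d$ yields no parity contradiction when $d$ is even. At this juncture the paper observes that the centre of projection meets $\spn(C)$ in a single point which cannot lie on $C$ (it is real, and the only real boundary point, the vertex, never lies on a configuration curve), so that $C\to C_m$ is the restriction to $C$ of a projection of $\spn(C)\cong\p^6$ from a point off the curve, and then treats this map as birational in order to ``conclude as in the above case''. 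So the step you must still supply is precisely to exclude that $C$ sits on a cone over $C_m$ with vertex at that point in such a way that the projection becomes a $d{:}1$ cover with $d\ge 2$; an appeal to the leg space does not do this.
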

\begin{proof}
Suppose that $C$ satisfies condition~\eqref{eq:condition} and $C_e$ is a twisted 
cubic. Then the curve~$C_m$, isomorphic to~$C_e$ under 
the Veronese embedding, is a rational normal sextic, and therefore spans a 
linear space of dimension~$6$. This means that the projection $C \longrightarrow 
C_m$ is a projective isomorphism, which means that the center of the 
projection is disjoint from~$\spn(C)$. By Section~\ref{duality:bond}, this 
implies that $C$ does not admit similarity or collinearity bonds. Recall from 
Remark~\ref{remark:consequences_dagger} that the curve~$C$ does not admit 
butterfly points, so the only ones left are the inversion points. However, 
since the degree of~$C$ is twice the number of inversion bonds, we would get $3$ 
of them, and this is not possible, since they come in conjugate pairs. 

Now, suppose that $C_e$ is a planar cubic. Then the curve~$C_m$ spans a linear 
space of dimension~$5$. This means that the center of the projection $C 
\longrightarrow C_m$ intersects~$\spn(C)$ in a single point. This shows again 
that $C$ does not admit similarity bonds, because the similarity bonds are 
contained in the center of projection and they occur pairwise. Hence we can 
conclude as in the above case.
\end{proof}

We focus therefore on curves~$C$ satisfying~\eqref{eq:condition} such that 
$C_e$ is planar of degree different from~$3$. First we rule out the case when 
$C_e$ is a line.

\begin{lemma} 
\label{lemma:planar_deg1}
Suppose that $C \subseteq X$ satisfies condition~\eqref{eq:condition}. Then 
$C_e$ cannot be a line.
\end{lemma}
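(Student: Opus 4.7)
The plan is to adapt the bond--and--degree argument of Lemma~\ref{lemma:no_twisted}. Assume $C_e$ is a line in $\p^3_{{}_\C}$. Then its image under the Veronese embedding by quadrics, $C_m=v(C_e)$, is a conic of degree $2$ spanning a plane $\p^2_{{}_\C}\subseteq\p^9_{{}_\C}$. By Remark~\ref{remark:consequences_dagger} one has $\dim\spn(C)=6$, so the restriction to $\spn(C)$ of the projection $\pi\colon\p^{16}_{{}_\C}\dashrightarrow\p^9_{{}_\C}$ has a kernel of projective dimension $3$; equivalently, $W:=\spn(C)\cap Z$ is a $\p^3$, where $Z$ is the centre of $\pi$.

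Next I would analyse the bonds. Since $C$ has no butterfly nor collinearity bonds by Remark~\ref{remark:consequences_dagger} and does not pass through the vertex, every point of $C\cap W$ is a similarity bond, and such bonds come in complex conjugate pairs. Let $i$ and $s$ denote the number of inversion and similarity bonds of $C$, respectively. Intersecting $C$ with $H=\{h=0\}$ yields $\deg C = 2\tth i + s$, because inversion bonds contribute intersection multiplicity $2$ (their tangent spaces being contained in $H$) while similarity bonds contribute multiplicity $1$. The projection formula, applied to $\pi$ after resolving its indeterminacy along $Z$, gives $\deg C = \deg(\pi|_C)\cdot\deg(C_m) + s = 2\tth e + s$, where $e := \deg(\pi|_C)$. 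Comparing the two expressions, $i=e$. Since $i$ must be even and $e\ge 1$ (the projection $\pi|_C$ is not constant, as $C_m$ is a curve), one obtains $e\in\{2,4,\dots\}$.

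To press further I would exploit the fact that $\pi$ preserves the coordinate $h$. Consequently every inversion bond of $C$ lies over one of the two conjugate points of $C_m\cap v(Q_0)$, where $Q_0\subset\p^3_{{}_\C}$ is the absolute quadric $\{e_0^2+e_1^2+e_2^2+e_3^2=0\}$, and at each such bond $\pi|_C$ ramifies with index $2$. Bookkeeping the fibres of the resolved map $\tilde\pi|_{\tilde C}\colon\tilde C\to C_m$ over these two points, and invoking the Riemann--Hurwitz formula, pins the minimal case $e=2$ down to $C$ being a rational normal sextic in $\spn(C)=\p^6_{{}_\C}$ with precisely two inversion and two similarity bonds; for larger even $e$, one forces $C$ to have positive arithmetic genus and a prescribed ramification pattern.

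The main obstacle is then to rule out this minimal case (and, by a uniform argument, every even $e\ge 2$). The plan is to exploit the inclusion $\spn(C)\subseteq\pi^{-1}(\spn(C_m))$, a $\p^9$ in $\p^{16}$, together with the bilinear sphere conditions imposed by the similarity bonds, to show that $\spn(L_C)\subseteq\check\p^{16}_{{}_\C}$ is positioned so specially with respect to the cone $Y$ that $\spn(L_C)\cap Y$ has positive dimension. By Proposition~\ref{prop:leg_set_intersection} this would make $|L_C|$ infinite, contradicting condition~\eqref{eq:condition}. Making this last step precise --- i.e.\ exhibiting the extra component of $\spn(L_C)\cap Y$ --- is the hard part of the argument.
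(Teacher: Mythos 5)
Your proposal is a genuinely different route from the paper's, but it is not a proof: you explicitly leave the decisive step open (exhibiting a positive-dimensional component of $\spn(L_C)\cap Y$), so no contradiction is ever reached. Moreover, the bookkeeping that precedes it rests on two unjustified multiplicity claims. The paper only asserts that \emph{inversion} bonds force intersection multiplicity $2$ of $C$ with $H=\{h=0\}$; nothing guarantees that a similarity bond contributes multiplicity exactly $1$ to $C\cdot H$, nor that the correction term in the degree formula for the projection $\pi|_C$ (which is the length of the scheme-theoretic intersection of $C$ with the centre of projection) coincides with that same number $s$. Without both, the identity $i=e$ does not follow, and the subsequent Riemann--Hurwitz analysis has nothing firm to stand on.

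The paper's own argument is short, kinematic, and supplies exactly the object you were missing. If $C_e$ is a line in the Euler parameters, the rotational part of the motion is a coset of a one-parameter group of rotations about a fixed axis direction $v$, i.e.\ the motion is a Schoenflies motion. Then every rotation $M$ occurring in $C\cap\SE$ sends $v$ to one and the same vector $w$, so for any leg $(a,b,d)$ and any $t\in\R$ one has $\sigma(a+t\tth v)=\sigma(a)+t\tth w$, whence $(a+t\tth v,\,b+t\tth w,\,d)$ is again a leg. This one-parameter family of legs is precisely the positive-dimensional piece of $L_C$ you hoped to construct "by position of $\spn(L_C)$ relative to $Y$", and it immediately contradicts the finiteness of $L_C$ required by~\eqref{eq:condition}. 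If you want to salvage your approach, this translation-invariance is the missing ingredient --- but once you have it, the entire bond-counting apparatus becomes unnecessary.
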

\begin{proof}
 If $C_e$ is a line the corresponding motion can only be a Schoenflies 
motion\footnote{These motions can be composed by a rotation about a fixed axis 
and an arbitrary translation.}. These motions with points moving on spheres 
where previously studied by Husty and Karger in~\cite{HustyKarger}. It is not 
difficult to see that no discrete solution can exist, as any leg can be 
translated along the axis of the Schoenflies motion without restricting the 
self-motion. Therefore we always end up with an $\infty$-pod.
\end{proof}

\begin{lemma}
\label{lemma:planar_deg2}
Suppose that $C \subseteq X$ satisfies condition~\eqref{eq:condition}. Then 
$C_e$ cannot be a conic.
\end{lemma}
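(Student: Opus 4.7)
The plan is to proceed as in Lemmas~\ref{lemma:no_twisted} and~\ref{lemma:planar_deg1}, by computing $\dim \spn(C_m)$ and using bonds together with a degree count on $C$. If $C_e$ is an irreducible conic in $\p^3$, then under the Veronese embedding by quadrics it maps to a rational normal quartic $C_m$ of degree~$4$, spanning a projective subspace of dimension~$4$ in $\p^9_{{}_\C}$. Since $\dim \spn(C) = 6$ by Remark~\ref{remark:consequences_dagger}, the center~$L$ of the projection $\p^{16}_{{}_\C} \dashrightarrow \p^9_{{}_\C}$ meets $\spn(C)$ in a projective line.

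Next, let $k$ denote the degree of the finite map $C \to C_m$ and $m = |C \cap L|$ counted with multiplicity. A computation with the preimage of a general hyperplane of $\spn(C_m)$ yields $\deg(C) = 4k + m$. Let $\alpha$ be the number of inversion bonds of $C$ and $\beta$ the number of similarity bonds; by condition~\eqref{eq:condition} and Remark~\ref{remark:consequences_dagger} these are the only bond types on $C$, and since inversion bonds are tangent to $H$ we get $\deg(C) = 2\alpha + \beta$. Moreover $\beta = m$, since similarity bonds of $C$ are exactly $C \cap L$, and both $\alpha$ and $\beta$ are even (bonds come in complex conjugate pairs). Combining the two expressions, $\alpha = 2k$ is automatically even, so in contrast with the twisted and planar cubic cases the parity of $\alpha$ provides no contradiction on its own.

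The hard step is therefore to rule out the remaining pairs $(k, m)$. For this I would exploit the rigid structure of a plane conic of rotations in Euler space $\p^3$: combined with the bond constraints along the line $\spn(C) \cap L$, the $\beta$ similarity bonds impose strong linear relations on the associated legs in $L_C$. The goal is to show that either the leg space $L_C$ acquires a positive-dimensional component---contradicting the finiteness demanded by~\eqref{eq:condition}---or two of the twenty legs are forced to share a common base or platform point, producing a butterfly or collinearity bond ruled out by~\eqref{eq:condition} and Remark~\ref{remark:consequences_dagger}. Carrying out this case analysis via an explicit parametrization of $C$ above the rational normal quartic $C_m$, together with the geometric classification of motions whose rotation component traces a plane conic, is the main obstacle.
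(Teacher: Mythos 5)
Your reduction is sound as far as it goes: for a conic $C_e$ the image $C_m$ is indeed a rational normal quartic spanning a $\p^4_{{}_\C}$, the center of the projection meets $\spn(C)$ in a line, and your bookkeeping $\deg(C)=4k+m=2\alpha+\beta$ with $\beta=m$ correctly shows that $\alpha=2k$ is automatically even, so the parity-of-inversion-bonds argument that disposes of the cubic cases in Lemma~\ref{lemma:no_twisted} gives nothing here. But that is exactly where your proof stops. The ``hard step'' you describe --- forcing a positive-dimensional leg set or a butterfly/collinearity bond out of ``strong linear relations'' imposed by the similarity bonds --- is a programme, not an argument: you give no mechanism by which a conic of rotations produces either degeneration, and it is not at all clear that one exists along those lines. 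As written, the lemma is not proved.

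The idea you are missing is to look at legs at infinity rather than at bonds. The ideal of a conic in $\p^3_{{}_\C}$ contains a $5$-dimensional vector space of quadrics, so the hyperplanes of $\p^9_{{}_\C}$ containing $C_m$ give, under the duality $\BSC$, a linear space $\check{\p}^4_{{}_\C}$ contained both in the dual of $\spn(C)$ and in the subspace $\check{\p}^9_{{}_\C}\subseteq\check{\p}^{16}_{{}_\C}$ dual to the projection onto the $(h,m_{ij})$-coordinates. That $\check{\p}^9_{{}_\C}$ meets the leg variety $Y$ exactly in the cone $Y_{\infty}$ over $\p^2_{{}_\C}\times\p^2_{{}_\C}$, i.e.\ in the legs whose base and platform points are both at infinity; $Y_\infty$ has dimension $5$ and degree $6$. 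Since $4+5=9$, the intersection $\check{\p}^4_{{}_\C}\cap Y_\infty$ is non-empty (generically six points) and is contained in $L_C$, so $L_C$ contains points with $z_{00}=0$. This contradicts condition~\eqref{eq:condition}, which requires all $20$ points of $L_C$ to be finite. Note that this argument is specific to the conic case precisely because a conic is cut out by \emph{five} independent quadrics --- one more than a twisted cubic --- which is what pushes the dual space up to dimension $4$ and forces the intersection with $Y_\infty$; your plan, which never considers legs at infinity, cannot see this obstruction.
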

\begin{proof}
  Assume that $C_e$ is a conic. Consider the linear projection 
$\p^{16}_{{}_{\C}} \dashrightarrow \p^{9}_{{}_{\C}}$: under the bilinear 
spherical condition $\BSC$ it determines a subspace $\check{\p}^{9}_{{}_{\C}} 
\subseteq \check{\p}^{16}_{{}_{\C}}$. By a direct inspection of 
Equation~\eqref{eq:bilinear_sphere_condition} one notices that the subvariety 
$Y_{\infty} \subseteq Y$ composed of those pairs $(a,b)$ of points with $a_0 = 
b_0 = 0$ (namely legs for which both the base and the platform point are at 
infinity) is contained in $\check{\p}^{9}_{{}_{\C}}$. The dimension 
of~$Y_{\infty}$ is~$5$ and its degree is~$6$, since it is a cone over the 
Segre variety~$\p^2_{{}_{\C}} \times \p^2_{{}_{\C}}$.

Consider now the set of linear forms in~$\p^{9}_{{}_{\C}}$ vanishing on~$C_m$: 
this is a vector space of dimension~$5$, since it is isomorphic to the vector 
space of all quadratic forms on~$\p^3_{{}_{\C}}$ that vanish along~$C_e$. In 
this way we get a linear space $\check{\p}^{4}_{{}_{\C}} \subseteq 
\check{\p}^{9}_{{}_{\C}}$. The intersection $\check{\p}^{4}_{{}_{\C}} \cap 
Y_{\infty} \subseteq L_{C}$ is non-empty and is in general constituted of $6$ 
points. Thus the number of real legs not at infinity is at most~$14$, and this 
contradicts the assumption that $L_C$ has $20$ real finite points.
\end{proof}

For the remaining cases we want to prove that there exists an isometry~$\tau$ 
such that the image of~$C$ under the corresponding projective automorphism is 
contained in~$X_{\inv}$. Recall from Section~\ref{duality:bond} that we denoted 
$e_0, \dotsc, e_3$ the coordinates of the~$\p^3_{{}_{\C}}$ where $C_e$ lives. 
We may assume without loss of generality that $e_0 = 0$ holds for the points of 
$C_e$; this can be achieved by a suitable rotation of the coordinate frame of 
the platform --- specifically by acting on~$C$ with a suitable rotation --- since by 
assumption $C_e$ is planar. In terms of the coordinates of~$X$, this means that 
we can apply an automorphism of~$\p^{16}_{{}_{\C}}$ induced by an isometry so 
that the points of~$C$ satisfy $m_{ij} = m_{ji}$ and $m_{11} + m_{22} + m_{33} + 
h = 0$.

We can use the assumption $e_{0}=0$ in order to simplify the embedding. Let 
$X_1$ be the intersection of $X$ with the linear space
\[
  \bigl\{ (h:M:x:y:r) \in \p^{16}_{{}_{\C}} \, : \, m_{ij} = m_{ji} 
\text{ and } m_{11} + m_{22} + m_{33} + h = 0 \bigr\} .
\]
Notice that $X_1$ is the set-theoretical preimage of the locus $\{e_0 = 0\}$ 
under the projection $X \dashrightarrow \p^3_{{}_{\C}}$. We project~$X_1$ from 
the point $(0: \dotsb 0: 1)$ in~$\p^{16}_{{}_{\C}}$ --- the only 
singular point of~$X$ of order~$20$ --- giving a subvariety~$X_2$
of~$\p^{11}_{{}_{\C}}$. Consider the map from $X_2$ to the weighted 
projective space $\p_{{}_{\C}}(\vec{1}, \vec{2})$ of dimension~$5$ (here 
$\vec{1} = (1,1,1)$ and $\vec{2} = (2,2,2,2,2,2)$), with coordinates 
$e_1,e_2,e_3$ of weight~$1$ and coordinates $p_1,p_2,p_3,q_1,q_2,q_3$ of 
weight~$2$ obtained in the following way: express the coordinates~$h$ 
and~$m_{ij}$ for $i,j \in \{1,2,3\}$ in terms of the Euler coordinates $e_1, 
e_2, e_3$, and apply the coordinate change
\[ 
  p_i \; = \; x_i+y_i, \quad q_i \; = \; x_i-y_i, \qquad \text{for 
  } i \in \{1,2,3\}. 
\]
The image of~$X_2$ is a weighted projective variety $Z \subseteq 
\p_{{}_{\C}}(\vec{1}, \vec{2})$ of dimension~$5$ defined by the 
equations
\begin{multline*} 
  e_1 \tth p_1 + e_2 \tth p_2 + e_3 \tth q_3 \; = \; 
  p_1 \tth q_1 + p_2 \tth q_2 + p_3 \tth q_3 \; = \\ 
  e_1 \tth q_2 - e_2 \tth q_1 \; = \;
  e_1 \tth q_3 - e_3 \tth q_1 \; = \; 
  e_2 \tth q_3 - e_3 \tth q_2 \; = \; 0. 
\end{multline*}
Therefore, for a curve~$C \subseteq \p^{16}_{{}_{\C}}$ for which $C_e$ is 
planar and satisfies $e_0 = 0$ we get a map $C \longrightarrow C_z \subseteq Z$ 
that is the composition of a projection, a linear change of variables and a 
Veronese map.

\begin{remark}
 At the beginning of the section we pointed out that isometries determine 
automorphisms of~$\p^{16}_{{}_{\C}}$ leaving $X$ invariant. Notice that the 
automorphisms corresponding to translations leave~$X_1$ invariant, since its 
equations comprise only the rotational part of isometries. Therefore 
translations act also on~$Z$.
\end{remark}

\begin{lemma} 
\label{lemma:planar_deg4}
Suppose that $C \subseteq X$ satisfies condition~\eqref{eq:condition}. Let $C_z 
\subseteq Z$ be the image of the curve~$C$ under the previously defined maps. 
Then there exists a translation of the platform such that the corresponding 
automorphism maps $C_z$ to a curve $C'_z$ whose points 
satisfy $q_1 = q_2 = q_3 = 0$. 
\end{lemma}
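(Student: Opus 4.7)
The plan is to reformulate the existence of the required translation as the statement that a certain rational function on $C_z$ lies in the linear span of $e_1, e_2, e_3$, and then to establish this via a line-bundle computation on $C$.

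First I would compute how a platform translation by $t \in \R^3$ acts on $Z$. Applying the $\p^{16}_{{}_{\C}}$-action $x \mapsto x - ht$, $y \mapsto y + Mt$ and using the identities $hI + M = 2\, e\, e^T$ and $hI - M = 2\|e\|^2 I - 2\, e\, e^T$ valid at $e_0 = 0$, one finds that translations act on $Z$ by
\[
  q \;\mapsto\; q - 2(e \cdot t)\, e, \qquad p \;\mapsto\; p - 2\|e\|^2 t + 2(e \cdot t)\, e.
\]
The defining equations $e_i q_j = e_j q_i$ of $Z$ force $q = \lambda e$ on $C_z$ for some rational function $\lambda$, and the translation acts by $\lambda \mapsto \lambda - 2(t \cdot e)$. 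The lemma therefore amounts to finding constants $t_1, t_2, t_3$ with $\lambda = 2(t_1 e_1 + t_2 e_2 + t_3 e_3)$ identically on $C_z$, equivalently to the vanishing on $\spn(C)$ of the three linear forms $q_k - 2 \sum_i t_i E_{ik}$ on $\p^{12}_{{}_{\C}}$ (where $E_{ij} = e_i e_j$ are the Veronese coordinates).

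Next I would carry out a line-bundle analysis on the normalization $\tilde C$ of $C$. By Lemmas~\ref{lemma:no_twisted}--\ref{lemma:planar_deg2} the image $C_e \subseteq \p^2_{{}_{\C}}$ is a plane curve of degree $d \geq 4$; combined with $\dim \spn(C) = 6$ one checks that the map $C \to C_e$ is generically birational, so that $\tilde C$ is identified with the normalization of $C_e$. Let $\mathcal{L}$ denote the pullback of $\mathcal{O}_{\p^2}(1)|_{C_e}$ to $\tilde C$, a line bundle of degree $d$. The common section $E_{11} = (e_1)^2$ of both line bundles identifies the pullback of $\mathcal{O}_{\p^{12}}(1)$ with $\mathcal{L}^2$. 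Consequently the rational function $\lambda = q_k / e_k$ belongs to $H^0(\mathcal{L})$.

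Finally I would conclude that $\lambda$ lies in the $3$-dimensional subspace $\spn\{e_1, e_2, e_3\} \subseteq H^0(\mathcal{L})$; with $\lambda$ so expressed, the coefficients read off the required translation vector $t$. For a smooth plane curve of degree $d \geq 3$ the conclusion is immediate, since $h^0(\mathcal{L}) = 3$ in that case. The main obstacle is the singular case, where $h^0(\mathcal{L})$ on $\tilde C$ may exceed $3$; there I would exploit the hypothesis $\dim \spn(C) = 6$ directly, using that the thirteen homogeneous coordinates of $\p^{12}_{{}_{\C}}$ restrict to $\spn(C)$ as only seven linearly independent functions, six of which are the $E_{ij}$'s. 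The requirement that $q_k = \lambda e_k$ lies in this seven-dimensional subspace for all $k = 1, 2, 3$ then produces linear constraints that, together with the three solutions $e_1, e_2, e_3$ already present, cut the admissible $\lambda$'s down to exactly $\spn\{e_1, e_2, e_3\}$.
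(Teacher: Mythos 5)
Your reduction of the lemma to the claim that the rational function $\lambda$ determined by $q=\lambda\,e$ on $C_z$ is itself a linear form in $e_1,e_2,e_3$ is correct, and your description of the translation action agrees with the paper's (up to a normalization constant). The line-bundle argument does settle the case where $C_e$ is a \emph{smooth} plane curve. The problem is that the generic case here is the singular one (in Borel's construction $C_e$ is a sextic of geometric genus $7$, hence far from smooth), and for that case your proof stops exactly where the mathematical content begins. The sentence asserting that the linear constraints ``cut the admissible $\lambda$'s down to exactly $\spn\{e_1,e_2,e_3\}$'' is a restatement of what must be proved, not an argument. Concretely, the hypothesis $\dim\bigl(\spn(C)\bigr)=6$ only yields that the nine weight-two forms $q_1,q_2,q_3$ and $e_ie_j$ satisfy at least two independent relations $L_i(\vec q)+Q_i(\vec e)=0$ on the curve ($L_i$ linear, $Q_i$ quadratic); to conclude that $\lambda$ is linear one must still show $Q_i=L\cdot L_i$ for a common linear form $L$. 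The paper does this by noting that $Q_1L_2-Q_2L_1$ is a cubic in $\vec e$ vanishing on $C_e$, hence identically zero because $\deg C_e\ge 4$, and then applying unique factorization. Your singular-case sketch never invokes $\deg C_e\ge 4$ at this point, and without it the conclusion is simply false: for a plane cubic $C_e$ the form $Q_1L_2-Q_2L_1$ could be a defining equation of $C_e$ and $\lambda$ need not be linear. So the gap cannot be closed by ``more of the same'' linear algebra; the degree bound has to enter.

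Two secondary points. First, your identification of $\lambda$ with a global section of $\mathcal{L}$ on the normalization $\tilde C$ rests on the claim that $C\to C_e$ is generically birational, which you assert (``one checks'') but do not prove; the fibres of $X\dashrightarrow X_m$ over a fixed rotation are $3$-dimensional (the translational part), so birationality of $C\to C_m$ is not automatic and would need an argument. The paper's proof sidesteps this entirely by working with forms vanishing on $C_w$ rather than with functions on a normalization. Second, even granting birationality, pulling back $\mathcal{O}_{\p^2}(1)$ through the rational map $C\dashrightarrow C_e$ and matching it with the hyperplane class of $\spn(C)$ requires checking there are no base points absorbing degree; again the paper's purely algebraic formulation avoids this. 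In short: nice idea for the smooth case, but the proof as written does not establish the lemma in the generality in which it is used.
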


\begin{proof}
  From the discussion so far we infer that $\deg(C_e) > 3$.
We are especially interested in the $q$-vector, so let $W 
\subseteq \p_{{}_{\C}}(1,1,1,2,2,2)$ be the projection of~$Z$ to the 
$e$ and $q$-coordinates and let $C_{w}$ be the image of~$C_z$ under such 
projection. The set~$W$ has dimension~$4$, and its equations are
\begin{equation}
\label{eq:equations_W}
  e_1 \tth q_2 - e_2 \tth q_1 \; = \; 
  e_1 \tth q_3 - e_3 \tth q_1 \; = \; 
  e_2 \tth q_3 - e_3 \tth q_2 \; = \; 0. 
\end{equation}
By a direct inspection of the map $C \longrightarrow C_{w}$ one notices that 
forms of weighted degree~$2$ on~$C_w$ correspond to linear form on~$C$. It 
follows that the vector space of weighted degree~$2$ forms on~$C_w$ has 
dimension at most~$7$. There are $9$ forms of weighted degree~$2$ on 
$\p_{{}_{\C}}(1,1,1,2,2,2)$ and they are all linear independent as forms on~$W$ 
 because the latter is defined by equations of weighted degree~$3$. Hence $C_w$ 
satisfies at least~$2$ equations~$E_1 = E_2 = 0$ of weighted degree~$2$. 

By construction, the polynomials $E_i$ are of the form $E_i = L_i 
\left(\vec{q}\right) + Q_i\left(\vec{e}\right)$, where $L_i$ is linear and 
$Q_i$ is quadratic. One notices that $L_1\left(\vec{q}\right)  
L_2\left(\vec{e}\right) - L_1\left(\vec{e}\right)  
L_2\left(\vec{q}\right)$ vanishes on~$W$, because it is a multiple of the 
polynomials in Equation~\eqref{eq:equations_W}. Therefore on~$C_w$ we have
\[
  E_1\left(\vec{e}, \vec{q}\right) \tth L_2\left( \vec{e} \right) - 
E_2\left(\vec{e}, \vec{q}\right) \tth L_1\left( \vec{e} \right) \; = \;
Q_1\left(\vec{e}\right) \tth L_2\left( \vec{e} \right) - 
Q_2\left(\vec{e}\right) \tth L_1\left( \vec{e} \right) \; = \; 0.
\]
The latter is a cubic equation only in the variables~$\vec{e}$, thus it is 
satisfied by~$C_e$. On the other hand, $C_e$ is a planar curve of degree greater 
than~$3$, so $C_e$ cannot satisfy a cubic nontrivial equation. Therefore we 
conclude that $Q_1\left(\vec{e}\right) \tth L_2\left( \vec{e} \right) - 
Q_2\left(\vec{e}\right) \tth L_1\left( \vec{e} \right)$ is zero 
on~$\p^{2}_{{}_{\C}}$. Since $L_1$ and $L_2$ cannot be proportional (otherwise 
we would be able to obtain from $E_1$ and $E_2$ a quadratic equation 
in~$\vec{e}$ satisfied by~$C_e$) we conclude by unique factorization that $L_2$ 
is a factor of~$Q_1$ and $L_1$ is a factor of~$Q_2$. Hence 
$Q_i\left(\vec{e}\right) = L\left(\vec{e}\right) \tth L_i\left(\vec{e}\right)$ 
for some linear polynomial~$L$. 

From Equation~\eqref{eq:equations_W} we infer that 
$L_1\left(\vec{q}\right) \tth e_j = L_1\left(\vec{e}\right) \tth q_j$ for 
$j \in \{1,2,3\}$. Since $E_1$ is zero on $C_{w}$, we have 
$-L_1\left(\vec{q}\right) = L\left(\vec{e}\right) \tth L_1\left(\vec{e}\right)$ 
on~$C_w$. Multiplying by $e_j$ the last equation yields:
\[
  -L_1 \left(\vec{e}\right) 
\tth q_j \; = \; -L_1 \left(\vec{q}\right) \tth e_j \; = \; 
L\left(\vec{e}\right) \tth L_1\left(\vec{e}\right) \tth e_j ,
\]
this implying that $q_j = L\left(\vec{e}\right) \tth e_j$ holds on~$C_w$ for 
$j \in \{1,2,3\}$.

One can verify that the automorphism corresponding to the translation by a 
vector $\vec{a} = (a_1,a_2,a_3) \in \R^3$ acts on the coordinates of 
$\p_{{}_{\C}}(1,1,1,2,2,2)$ by sending
\[
 \left( \vec{e}, \, \vec{q} \right) \; \mapsto \; \left( \vec{e}, \, \vec{q} + 
\ell_{\vec{a}} \tth \vec{e} \right),
\]
where $\ell_{\vec{a}} = a_1 \tth e_1 + a_2 \tth e_2 + a_3 \tth e_3$. Hence, if 
$L\left(\vec{e}\right) = \alpha_1 \tth e_1 + \alpha_2 \tth e_2 + \alpha_3 \tth 
e_3$, it is enough to apply to $C_{w}$ the automorphism corresponding to the 
translation by the vector $\alpha = (\alpha_1, \alpha_2, \alpha_3)$ to get that 
$q_1 = q_2 = q_3 = 0$ holds on~$C_w$. This proves the statement.
\end{proof}

We are now ready to prove our main result.
\begin{theorem}
\label{thm:icosapod_line_symmetric}
 Let $\Pi$ be a mobile icosapod such that its configuration 
curve~$K_{\Pi}$ is irreducible and satisfies \eqref{eq:condition}. Then 
$K_{\Pi}$ is a line-symmetric motion.
\end{theorem}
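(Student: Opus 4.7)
The plan is to assemble the preceding lemmas and the two coordinate adjustments developed in this section into a single isometric transformation bringing $K_{\Pi}$ inside $X_{\inv}$.

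First, I would invoke Proposition~\ref{proposition:planar_or_cubic} together with Lemmas~\ref{lemma:no_twisted}, \ref{lemma:planar_deg1} and~\ref{lemma:planar_deg2}: since $K_{\Pi}$ satisfies condition~\eqref{eq:condition}, the associated curve $C_{e} \subseteq \p^{3}_{{}_{\C}}$ cannot be a twisted cubic, a planar cubic, a line, or a conic. Combined with Proposition~\ref{proposition:planar_or_cubic}, this forces $C_{e}$ to be a planar curve of degree at least~$4$, which is exactly the hypothesis needed for Lemma~\ref{lemma:planar_deg4}.

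Next, since $C_{e}$ lies in a plane in~$\p^{3}_{{}_{\C}}$, one can pick a rotation $\rho$ of the platform frame whose induced action on the Euler coordinates $(e_{0}:e_{1}:e_{2}:e_{3})$ sends this plane to $\{e_{0}=0\}$. The induced projective automorphism of~$\p^{16}_{{}_{\C}}$ preserves~$X$, and the image $\rho \cdot K_{\Pi}$ then lies in the subvariety $X_{1}$, i.e.\ its points satisfy $m_{ij}=m_{ji}$ and $m_{11}+m_{22}+m_{33}+h=0$. This reproduces exactly the setting assumed before Lemma~\ref{lemma:planar_deg4}.

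Then I would apply Lemma~\ref{lemma:planar_deg4} to $\rho \cdot K_{\Pi}$: it produces a translation $t$ such that, on the image~$C'_{z}$ of $t \cdot \rho \cdot K_{\Pi}$ in $Z$, one has $q_{1}=q_{2}=q_{3}=0$. By the remark preceding Lemma~\ref{lemma:planar_deg4}, translations act on~$X_{1}$ and leave its defining equations untouched, so $t \cdot \rho \cdot K_{\Pi}$ still satisfies $M = M^{t}$ and $m_{11}+m_{22}+m_{33}+h=0$. Pulling the condition $q_{i}=0$ back through the coordinate change $p_{i}=x_{i}+y_{i}$, $q_{i}=x_{i}-y_{i}$, it translates to $x_{i}=y_{i}$ for $i\in\{1,2,3\}$ on the curve itself. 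Thus the three defining equations of $X_{\inv}$ hold simultaneously: setting $\tau = t\circ\rho$, the projective automorphism associated to~$\tau$ maps $K_{\Pi}$ into $X_{\inv}$, which is precisely the definition of a line-symmetric motion.

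Since the substantive work has already been invested in Lemmas~\ref{lemma:no_twisted}--\ref{lemma:planar_deg4}, the only delicate point left is to check that the intermediate hypotheses survive the successive changes of frame: that $C_{e}$ remains planar after the preliminary rotation (which is automatic, as rotations act linearly on the $e$-coordinates), that condition~\eqref{eq:condition} is preserved (which it is, being invariant under isometries of base and platform), and that Lemma~\ref{lemma:planar_deg4} applies to the rotated curve (its hypotheses match verbatim). Once these are verified, no further computation is needed. I expect no genuine obstacle at this final stage; the theorem is the capstone of the preceding lemmas rather than a source of new difficulty.
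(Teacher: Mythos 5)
Your proposal is correct and follows essentially the same route as the paper's own proof: rule out the cubic, line and conic cases via Proposition~\ref{proposition:planar_or_cubic} and Lemmas~\ref{lemma:no_twisted}, \ref{lemma:planar_deg1}, \ref{lemma:planar_deg2}, normalize to $e_0=0$ by a rotation so that $M=M^t$ and $m_{11}+m_{22}+m_{33}+h=0$ hold, and then use Lemma~\ref{lemma:planar_deg4} to find a translation forcing $q_1=q_2=q_3=0$, i.e.\ $x=y$. Your write-up is in fact somewhat more explicit than the paper's about composing the rotation and translation and about why the hypotheses persist under the changes of frame.
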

\begin{proof}
From the discussion so far (Proposition~\ref{proposition:planar_or_cubic}, 
Lemma~\ref{lemma:no_twisted} and the paragraph following it) we know that it is 
possible to apply a projective automorphism to~$K_{\Pi}$ so that the equations 
$m_{ij} = m_{ji}$ and $m_{11} + m_{22} + m_{33} + h = 0$ hold. Hence we only 
need to ensure $x = y$. However, in the new embedding in~$\p_{{}_{\C}}(\vec{1}, 
\vec{2})$ those equations correspond to $q_1 = q_2 = q_3 = 0$, so 
Lemma~\ref{lemma:planar_deg4}, Lemma~\ref{lemma:planar_deg2} and 
Lemma~\ref{lemma:planar_deg1} show the claim.
\end{proof}

\begin{remark}
From Theorem~\ref{thm:icosapod_line_symmetric} the set of base and platform 
points of the icosapod possesses a line-symmetry during the complete 
self-motion. But this property holds for any pod with a line-symmetric 
self-motion (see Theorem~$1.1$). As a consequence one could call these 
mechanical linkages "line-symmetric icosapods" by analogy to the "line-symmetric 
Bricard octahedra".
\end{remark}

\section{Construction of real icosapods}
\label{construction}

Borel proposed to construct line-symmetric icosapods simply by 
intersecting~$X_{\inv}$ with a general linear subspace~$T$ of dimension~$7$ in 
$\p^{10}_{{}_{\C}}$. Since $X_{\inv}$ is a variety of codimension~$6$ and 
degree~$12$ in~$\p^{10}_{{}_{\C}}$, the intersection $C = X_{\inv} \cap T$ is an 
irreducible curve of degree~$12$. Indeed, $C$ is a canonical curve of genus~$7$, 
as one can read off from the Hilbert series of~$X_{\inv}$. The projection~$C_e$ 
of~$C$ to the Euler parameters is a planar sextic. Recall that the leg set~$L_C$ 
is the intersection of~$Y$ with the dual of~$\spn (C)$, a linear subspace of 
codimension~$7$; this is, in general, a set of~$20$ complex points. It is not 
clear whether there are examples with~$20$ real legs, and the goal of this 
section is to show that there are instances of such curves~$C$ for which this is 
the case. We reduce the problem to a question on spectrahedra whose answer is 
well-known.

\subsection{Borel's construction}
We rephrase Borel's construction using the terminology and concepts 
introduced in this paper. Let $S$ be the linear subspace defined by 
the equations $M=M^t$ and $x=y$. Notice that $S$ has dimension~$10$ and 
contains~$X_{\inv}$, although $\spn(X_{\inv})$ is a linear subspace of 
dimension~$9$. The restriction of the bilinear sphere condition from 
Equation~\eqref{eq:bilinear_sphere_condition} can be written in a symmetric way 
as
\begin{equation}
\label{eq:symmetric_bilinear_sphere_condition}
  l \tth h + z_{00} r - 4\sum_{i = 1}^{3} s_{0i} x_i 
  -2\sum_{i = 1}^{3} z_{ii} m_{ii} -4\sum_{1\le i< j}^{3} s_{ij} m_{ij} \; = 
\; 0,
\end{equation}
where $s_{ij}=z_{ij}+z_{ji}$ for $1\le i<j\le 3$. We denote this equation
by $\SBSC$, for {\em symmetric bilinear sphere condition}. It defines a duality
between $S$ and a linear subspace $\check\p^{10}_{{}_{\C}} \subseteq 
\check\p^{16}_{{}_{\C}}$ whose projective coordinates are $l$,$z_{00}, \dotsc, 
z_{33}$,$s_{01},\dots,s_{23}$. The intersection of the leg variety~$Y$ with 
such $\check\p^{10}_{{}_{\C}}$ parametrizes pairs of legs obtained by swapping 
the roles of the base and the platform points. 
Denote by $\pi \colon \check\p^{10}_{{}_{\C}} \dashrightarrow 
\check\p^{9}_{{}_{\C}}$ the projection
defined by removing the $l$-coordinate. We denote by $Y_{\inv}$ the image
of the map $\alpha \colon \p^3_{{}_{\C}} \times \p^3_{{}_{\C}} \longrightarrow 
\check\p^{9}_{{}_{\C}}$,
\[
 \bigl( (a_0:\dots:a_3),(b_0:\dots:b_3) \bigr) 
\mapsto(\underbrace{a_0 \tth b_0}_{z_{00}} 
:\dotsb: \underbrace{a_3 \tth b_3}_{z_{33}}: \underbrace{a_0 
\tth b_1 + a_1 \tth b_0}_{s_{01}}: \dotsb: \underbrace{a_2 \tth b_3 + a_3 
\tth b_2}_{s_{23}}).
\]
One can easily prove that $Y_{\inv}$ is nothing but the projection under~$\pi$ 
of the intersection $Y \cap \check\p^{10}_{{}_{\C}}$. Note that $\alpha$ is a 
$2:1$ map, since $\alpha(a,b) = \alpha(b,a)$ for all $a,b \in 
\p^3_{{}_{\C}}$. Because of this, it might happen that two pairs of complex 
points are sent by~$\alpha$ to a real point of~$Y_{\inv}$. 

For any curve $C \subseteq X_{\inv}$, the leg set $L_C$ is equal to the 
intersection of the linear space~$\tilde{\Gamma}$, dual to~$\spn(C)$, with the 
cone over~$Y_{\inv}$ in~$\check\p^{10}$, namely $\pi^{-1} \left( Y_{\inv} 
\right)$. If $\dim \spn(C)=6$, then $\dim \tilde{\Gamma} = 3$. Since $X_{\inv}$ 
is contained in the hyperplane $\{ m_{11}+m_{22}+m_{33}+h=0 \}$, it follows that 
$\tilde{\Gamma}$ passes through the point~$p_e$ with coordinates $l=-2$, 
$z_{11}=z_{22}=z_{33}=1$ and all other coordinates being zero. Borel's 
construction can be rephrased as simply choosing a $3$-space passing 
through~$p_e$ and intersecting with the cone over $Y_{\inv}$. This cone has 
degree~$10$ and codimension~$3$ in $\check\p^{10}_{{}_{\C}}$, so 
generically there are $10$ solutions (possibly complex), each corresponding to 
a pair of legs.

For a general $3$-space $\tilde{\Gamma}$ passing through $p_e$, one can ask 
three questions on reality:
\begin{enumerate}
\item How many of the 10 points of $\tilde{\Gamma} \cap \pi^{-1} \left( 
Y_\inv \right)$ are real?
\item How many of the real points above have real preimages under~$\alpha$? 
Namely, how many real legs does the curve~$C$ admit?
\item Does the curve $X_{\inv} \cap \Lambda$ have real components, where 
$\Lambda$ is the dual to~$\tilde{\Gamma}$ under~$\SBSC$?
\end{enumerate}
The answers to Question~$(1)$ and~$(2)$ only depend on the projection 
of~$\tilde{\Gamma}$ to~$\check\p^9_{{}_{\C}}$.
In order to obtain positive answers for Question~$3$, it is also convenient to
start with the projection to~$\check\p^9_{{}_{\C}}$.

\begin{definition}
A {\em Borel subspace} $\Gamma$ is a 3-space in~$\check\p^9_{{}_{\C}}$ passing 
through~$\pi(p_e)$.
\end{definition}

The following proposition settles Question~$(3)$.

\begin{proposition}
Let $\Gamma$ be a Borel subspace. Then there exists a 3-space $\tilde{\Gamma}$ 
passing through~$p_e$ such that $\pi(\tilde{\Gamma}) = \Gamma$ and 
$X_{\inv} \cap \Lambda$ has real components, where $\Lambda$ is the dual 
of~$\tilde{\Gamma}$ under~$\SBSC$.
\end{proposition}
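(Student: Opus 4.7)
The plan is to exploit the three real parameters in choosing a lift $\tilde\Gamma$ of $\Gamma$ passing through $p_e$, and to translate the reality of $X_{\inv}\cap\Lambda$ into a question on quartic spectrahedra settled in~\cite{Ottem2015}. Since $\pi(p_e)$ is already required to lie in $\Gamma$, specifying $\tilde\Gamma$ amounts to assigning, linearly in $\Gamma$, an $l$-coordinate to every point, subject only to $l(\pi(p_e))=-2$; hence the set of real lifts forms a three-dimensional real affine space. Via $\SBSC$-duality each such lift produces a real six-dimensional $\Lambda\subset\p^{10}_{{}_\C}$ lying in the hyperplane $\{h+m_{11}+m_{22}+m_{33}=0\}$ spanning $X_{\inv}$, so the goal is to find a lift for which the canonical curve $X_{\inv}\cap\Lambda$ of degree~$12$ and genus~$7$ has at least one real component.

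First I would set up the quartic symmetroidal structure on~$\Gamma$. Packaging the coordinates of $\check{\p}^9_{{}_\C}$ into the symmetric $4\times 4$ matrix $A(z,s)$ whose diagonal entries are the $2z_{ii}$ and whose off-diagonal entries are the $s_{ij}$, the map~$\alpha$ identifies $Y_{\inv}$ with the rank-at-most-two locus of $A$ and $\{\det A=0\}$ with a quartic in $\check{\p}^9_{{}_\C}$. Restricting $A$ to the Borel subspace $\Gamma\cong\p^3$ yields exactly the setting of~\cite{Ottem2015}: a linear pencil of real symmetric $4\times 4$ matrices parametrized by $\Gamma$, whose determinantal surface is a real quartic symmetroid and whose positivity locus $\mathcal{S}_\Gamma$ is a real quartic spectrahedron. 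The curve $X_{\inv}\cap\Lambda$ is related to this symmetroid through the rank-drop correspondence, so that its real components are controlled by the real geometry of $\mathcal{S}_\Gamma$ together with the choice of lift.

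The final step is the reality argument: by~\cite{Ottem2015} the moduli of real quartic spectrahedra contains open strata of instances with nonempty interior and topologically rich real symmetroids, and I would argue that the three real degrees of freedom in the lift are sufficient to adjust the associated double-cover data so as to force a real component of $X_{\inv}\cap\Lambda$. The main obstacle is precisely this openness/surjectivity statement, namely verifying that the three-parameter family of real lifts maps surjectively onto a neighbourhood of a favourable configuration in the relevant moduli. I would approach this either by producing one explicit lift realising the desired real configuration (via an instance from~\cite{Ottem2015}) and then deforming, or by a dimension count against the codimension of the nowhere-real locus of lifts.
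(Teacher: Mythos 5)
Your proposal sets up the right parameter space (the three\hyp{}dimensional family of lifts $\tilde\Gamma$ of $\Gamma$ through $p_e$), but it does not actually prove that one of these lifts works; you explicitly defer the decisive step to an unproven ``openness/surjectivity'' claim. Worse, the route you propose for closing that gap points in the wrong direction: the quartic spectrahedron/symmetroid attached to $\Gamma$ is determined by $\Gamma$ alone and does not vary with the lift. In the paper it governs Questions~(1) and~(2) --- the reality of the points of $\tilde\Gamma\cap\pi^{-1}(Y_{\inv})$ and of their preimages under $\alpha$, i.e.\ of the legs --- whereas the reality of the configuration curve $X_{\inv}\cap\Lambda$ (Question~(3)) is exactly the part that depends on the choice of $l$-coordinates, so no amount of information about the spectrahedron of $\Gamma$ can by itself decide it. Note also that the proposition is asserted for an \emph{arbitrary} Borel subspace, with no assumption on the type of the associated spectrahedron, so any argument that needs a ``favourable configuration'' in the moduli of spectrahedra proves less than what is claimed.

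The missing idea is an elementary fibration argument, which is what the paper uses. Let $U\subseteq S$ be the dual under $\SBSC$ of the $4$-space $\pi^{-1}(\Gamma)$; since $p_e\in\pi^{-1}(\Gamma)$ and the dual hyperplane of $p_e$ is $\{h+m_{11}+m_{22}+m_{33}=0\}$, the center $U$ lies in that hyperplane, which also contains $X_{\inv}$. The linear spaces $\Lambda$ dual to the lifts $\tilde\Gamma$ are precisely the fibers of the projection $f\colon S\dashrightarrow\p^4_{{}_{\C}}$ from $U$, restricted to that hyperplane; hence $f$ maps $X_{\inv}$ into a $3$-space and $f|_{X_{\inv}}$ has one\hyp{}dimensional fibers $X_{\inv}\cap\Lambda$. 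Since the real locus of $X_{\inv}$ is $4$-dimensional (it parametrizes half\hyp{}turns, i.e.\ lines in $\R^3$) and the real image is at most $3$-dimensional, some fiber must have a real one\hyp{}dimensional component; taking $\tilde\Gamma$ dual to that fiber finishes the proof. This covering/dimension count over the real points is what replaces your deformation or codimension argument, and without it (or an explicit construction) your proposal remains a plan rather than a proof.
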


\begin{proof}
Let $f \colon S \dashrightarrow \p^4_{{}_{\C}}$ the projection from the linear 
subspace~$U$ dual to $\pi^{-1}(\Gamma)$. Then $U$ is 
contained in the hyperplane $\{ m_{11}+m_{22}+m_{33}+h=0 \}$. Hence the image 
of~$X_\inv$ under~$f$ is contained in a linear $3$-space, and $f|_{X_\inv}$ has 
one-dimensional fibers. Since $X_\inv$ has real components, it follows that 
there exist fibers $(f|_{X_\inv})^{-1}(q)$ with real components, for some 
$q \in \p^4_{{}_{\C}}$. We just need to choose~$\tilde{\Gamma}$ dual 
to~$f^{-1}(q)$; then $X_{\inv} \cap \Lambda$ coincides with 
$(f|_{X_\inv})^{-1}(q)$ and therefore has real components.
\end{proof}

In order to get some statistical data on the answers to Question~$(1)$ 
and~$(2)$, we tested $10000$ random examples\footnote{The Maple code used to 
perform such experiments can be downloaded from 
\url{http://matteogallet.altervista.org/main/papers/icosapods2015/Icosapods.mpl}
}. The results are 
shown in Tables~\ref{table:one} and~\ref{table:two}.

\begin{table}
\caption{Points in $\tilde{\Gamma} \cap \pi^{-1}(Y_\inv)$.\label{table:one}}
\begin{tabular}{rcccccc}
\toprule
\text{no. of real points} & 2 & 4 & 6 & 8 & 10 \\ 
frequency & 22 & 1067 & 3638 & 4035 & 1238 \\
\bottomrule
\end{tabular}
\end{table}

\begin{table}
\caption{Points in $\alpha^{-1}\bigl( \tilde{\Gamma} \cap \pi^{-1}(Y_\inv) 
\bigr)$.\label{table:two}}
\begin{tabular}{rccccccccccc}
\toprule
\text{no. of real points} & 2 & 4 & 6 & 8 & 10 & 12 & 14 & 16 & 18 & 
20 \\ 
frequency & 0 & 4107 & 0 & 5240 & 0 & 650 & 0 & 3 & 0 & 0 \\
\bottomrule
\end{tabular}
\end{table}

As one can see, the experimental data seem to indicate that there are no 
pods with $20$ real legs. This is, however, misleading; see the next 
section.

\subsection{Icosapods via spectrahedra}

We conclude our work by showing how it is possible to construct a mobile 
icosapod with $20$ real legs using some result in convex algebraic geometry.

Consider a 4-dimensional vector space $A$ of symmetric $4\times 4$-matrices 
over~$\R$. Classically, the {\em spectrahedron} defined by~$A$ is the 
subset of~$A$ comprised of positive semidefinite matrices. One can 
also consider the spectrahedron as a subset of the projective space 
$\p(A) \cong \p^3$. The boundary of the spectrahedron consists of the 
semidefinite matrices with determinant~$0$, and hence its Zariski closure is a 
quartic surface in~$\p^3$, called the {\em symmetroid} defined by~$A$. In 
general, a symmetroid has $10$ double points, corresponding to matrices of 
rank~$2$.

Given a spectrahedron whose symmetroid has $10$ complex double points, its 
{\em type} is the pair of integers $(a,b)$ where $a$ is the number of real 
double points of the symmetroid and $b$ is the number of real double points 
of the symmetroid that are also contained in spectrahedron.

\begin{theorem} 
\label{thm:spectrahedron}
There is a bijective correspondence between quartic spectrahedra containing 
the matrix $E := \left( \begin{smallmatrix} 
  0 & 0 & 0 & 0 \\
  0 & 1 & 0 & 0 \\
  0 & 0 & 1 & 0 \\
  0 & 0 & 0 & 1
\end{smallmatrix} \right)$
and Borel subspaces. For a spectrahedron defined by a vector space~$A$ and the 
corresponding Borel subspace~$\Gamma$, the following statement holds: 
if the spectrahedron has type $(a,b)$, then $\Gamma$ intersects~$Y_\inv$ 
in $a$ real points, and $a-b$ of them have real preimages under~$\alpha$.
\end{theorem}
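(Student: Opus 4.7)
The plan is to interpret $\check{\p}^9_{{}_{\C}}$ as the projectivization of $\mathrm{Sym}^2(\R^4)$, the space of symmetric $4\times 4$ matrices, by arranging the coordinates $(z_{00}, \dotsc, z_{33}, s_{01}, \dotsc, s_{23})$ as the diagonal and off-diagonal entries of such a matrix. Under this identification, the map $\alpha$ sending $(a,b)$ to the symmetric tensor $ab^t + ba^t$ has image $Y_{\inv}$ equal to the variety of symmetric $4\times 4$ matrices of rank at most~$2$, and the point $\pi(p_e)$ (read off from $l=-2$, $z_{11}=z_{22}=z_{33}=1$, all other coordinates zero) corresponds precisely to the matrix~$E$ in the statement. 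Consequently, a Borel subspace $\Gamma$ is the projectivization of a $4$-dimensional linear subspace $A \subseteq \mathrm{Sym}^2(\R^4)$ containing~$E$; since $E$ is positive semidefinite, the spectrahedron $\{M \in A : M \succeq 0\}$ is nonempty and contains~$E$. Conversely, any quartic spectrahedron containing~$E$ arises from such an~$A$, and this is the bijection claimed.

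To relate real points of $\Gamma \cap Y_{\inv}$ to real double points of the symmetroid $\{\det = 0\} \subseteq \p(A)$, I would invoke the classical fact that, generically, the singular points of a quartic symmetroid are exactly the matrices of rank~$2$ in $\p(A)$. Since $\Gamma \cap Y_{\inv}$ is by construction the intersection of $\p(A)$ with the rank-$\le 2$ locus, the two sets coincide, so the integer $a$ of real double points of the symmetroid equals the number of real points of $\Gamma \cap Y_{\inv}$.

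The heart of the proof is the reality analysis of the fibers of $\alpha$ over real rank-$2$ symmetric matrices. If $M = a b^t + b a^t$ with $a,b \in \R^4$ linearly independent, then $x^t M x = 2(a^t x)(b^t x)$ takes both positive and negative values, so $M$ has signature $(1,1)$ and is indefinite. Conversely, diagonalizing a real indefinite rank-$2$ symmetric matrix as $\lambda_1 u_1 u_1^t + \lambda_2 u_2 u_2^t$ with $\lambda_1 > 0 > \lambda_2$, and setting $p = \sqrt{\lambda_1}\tth u_1 + \sqrt{-\lambda_2}\tth u_2$, $q = \sqrt{\lambda_1}\tth u_1 - \sqrt{-\lambda_2}\tth u_2$, a direct computation gives $M = \tfrac{1}{2}(p q^t + q p^t)$, a real $\alpha$-preimage. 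Hence a real rank-$2$ symmetric matrix admits a real $\alpha$-preimage if and only if it is indefinite; a real definite one (psd or nsd) only admits a complex-conjugate preimage.

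Finally, a rank-$2$ matrix $M$ defines a point $[M] \in \p(A)$ lying in the projective spectrahedron exactly when $M$ or $-M$ is positive semidefinite, i.e.\ when $M$ is definite. Therefore among the $a$ real double points, the $b$ lying in the spectrahedron are precisely the definite ones, and the remaining $a-b$ are the indefinite ones, which by the previous paragraph are exactly the real double points with real $\alpha$-preimages. The main obstacle I anticipate is bookkeeping of projective reality conventions --- in particular the identification $[M]=[-M]$ when deciding membership in the spectrahedron --- and verifying the mild genericity assumption required to identify the singular locus of the symmetroid with $\Gamma \cap Y_{\inv}$; everything else reduces to the linear-algebraic signature dichotomy above.
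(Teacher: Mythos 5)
Your proposal is correct and follows essentially the same route as the paper: the same identification of $\check{\p}^9_{{}_{\C}}$ with projectivized symmetric $4\times 4$ matrices sending $\pi(p_e)$ to $E$ and $Y_{\inv}$ to the rank-$2$ locus, and the same reality criterion (the paper phrases your signature dichotomy as ``the quadratic form is a product of two distinct real linear forms iff the $\alpha$-preimage is real iff the point is off the spectrahedron''). You merely spell out the eigenvalue computation that the paper leaves implicit.
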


\begin{proof}
We identify $\check\p^9_{{}_{\C}}$ with the projectivization of the vector 
space of symmetric $4\times 4$ matrices in the following way: a point with 
homogeneous coordinates $z_{00},\dots,s_{23}$ corresponds to the class of the 
matrix
\[
  \left( \begin{array}{cccc}
    2\tth z_{00} & s_{01} & s_{02} & s_{03} \\
    s_{01} & 2\tth z_{11} & s_{12} & s_{13} \\
    s_{02} & s_{12} & 2\tth z_{22} & s_{23} \\
    s_{03} & s_{13} & s_{23} & 2\tth z_{33}
  \end{array} \right).
\]
A linear subspace~$A$ of dimension~$3$ in the space of symmetric matrices 
containing the matrix~$E$ corresponds then to a Borel subspace~$\Gamma$.

The subvariety~$Y_\inv$ corresponds to the subvariety of matrices
of rank~$2$. A real matrix of rank~$2$ does not lie on the spectrahedron if and 
only if the quadratic form defined by it is a product of two distinct real 
linear forms, and this is true if and only if its preimage under~$\alpha$ 
is real. 
\end{proof}

Degtyarev and Itenberg in~\cite{Degtyarev2010} determined all possible types of
quartic spectrahedra. In particular, spectrahedra of type $(10,0)$ do exist, 
hence by Theorem~\ref{thm:spectrahedron} they provide Borel subspaces 
intersecting~$Y_{\inv}$ in $10$ real points, each of them having two real 
preimages under the map~$\alpha$. This implies that there exist Borel icosapods 
with $20$ real legs.

In~\cite{Ottem2015}, the authors give explicit examples of spectrahedra for all 
possible types.
The given example of type $(10,0)$ does not contain the matrix~$E$, but
it is easy to adapt their example to one of the same type that 
contains~$E$.

\subsection{Example}

Starting from \cite[Section~2, Case~(10, 0)]{Ottem2015} we 
computed\footnote{The Maple code containing a similar computation can be 
downloaded from 
\url{http://matteogallet.altervista.org/main/papers/icosapods2015/Icosapods.mpl}
} the following example, which is suitable for graphical 
representation:
\begin{align*}
P_1 = p_4 & = 
\left(-\tfrac{19493}{142100},-\tfrac{2088}{94325},-\tfrac{24}{9625} \right), 
& \quad 
p_1 = P_4 & = 
\left(-\tfrac{36411}{267844},-\tfrac{1608}{177793},\tfrac{504}{25399} \right), 
\\
P_2 = p_5 &=
\left(-\tfrac{269}{5000},\tfrac{39}{1000},\tfrac{17}{500} \right), &\quad
p_2 = P_5 & =
\left(-\tfrac{47}{368},-\tfrac{12}{1771},\tfrac{21}{1265} \right), \\
P_3 = p_6 &=
\left(-\tfrac{1863}{14645},-\tfrac{106851}{1555400},\tfrac{2509}{222200} 
\right), & \quad
p_3 = P_6 & =
\left(-\tfrac{15185}{112462},-\tfrac{120}{149303},\tfrac{48}{3047} \right).
\end{align*}

We apply a half-turn about a line $\ell$ through the point 
$(-\tfrac{1}{10},0,0)$ in direction 
$(1,\tfrac{7003716944}{10000000000},\tfrac{8}{10})$ to the platform. In the resulting initial 
position, which is illustrated in Figure~\ref{fig0}, the squared leg lengths 
of the first six legs read as follows:
\begin{equation*}
\begin{split}
d_1^2=d_4^2&=\tfrac{1081643179736912972309543483891375692}
{276669953748621822688942197018838171875}, \\
d_2^2=d_5^2&=\tfrac{219482305781081742844809989061}
{29002829339836395492656900000000},\\
d_3^2=d_6^2&=\tfrac{4185335506762812187908674782558830797}
{636621874987061375644008358435317156000}.
\end{split}
\end{equation*}
For this input data the self-motion consists of two components. The 
trajectories of the component which passes through the initial position are 
illustrated in Figures~\ref{fig2} and~\ref{fig3}. In the latter figure also 
the associated basic surface is displayed. An animation of this line-symmetric 
self-motion can be downloaded from 
\url{www.geometrie.tuwien.ac.at/nawratil/icosapod.gif}.

\begin{figure}[!ht] 
\begin{center}
 \begin{overpic}
    [width=120mm]{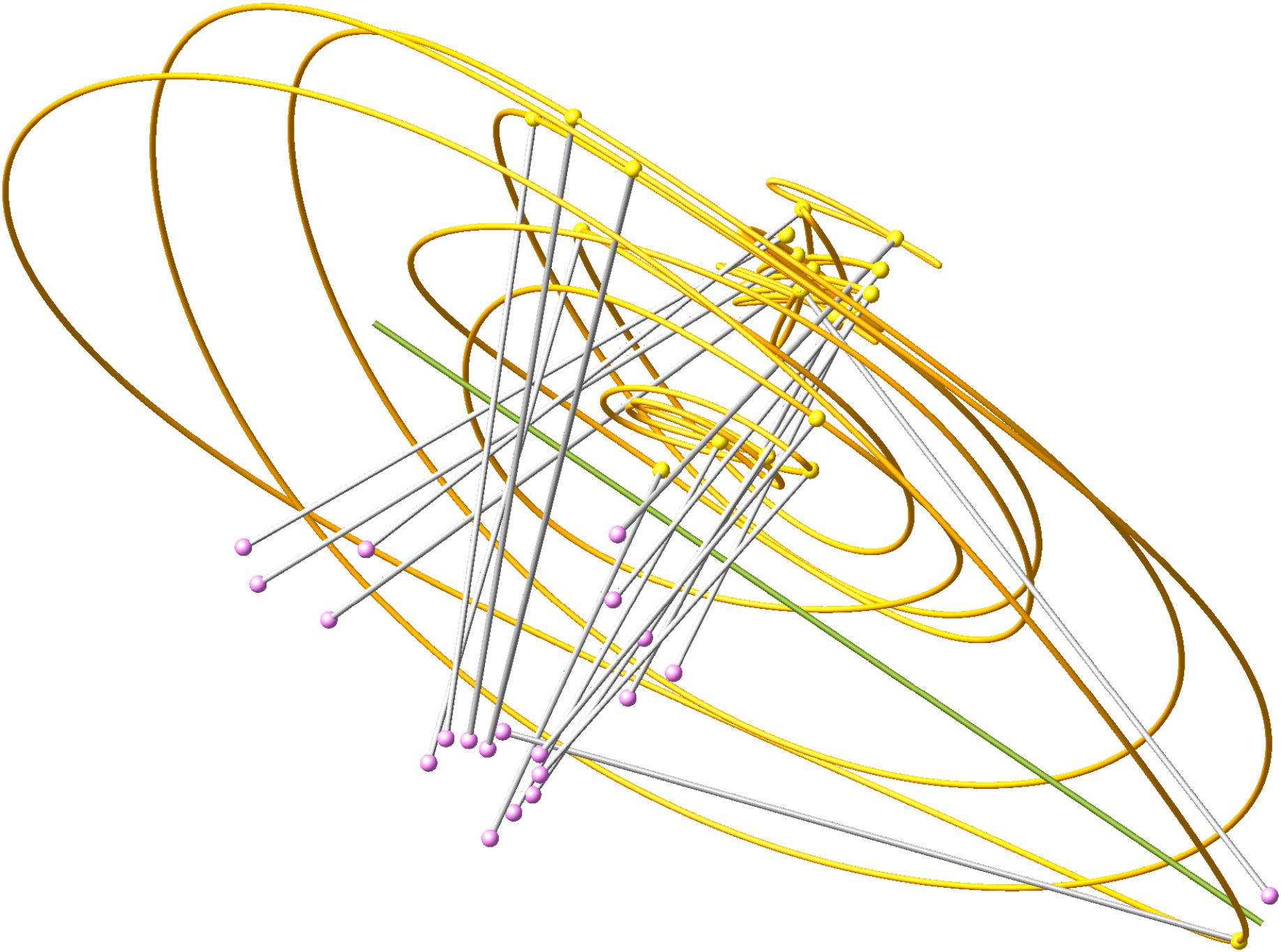}
\begin{small}
\put(29,49.9){$\ell$}
\end{small}
  \end{overpic} 
\end{center}
\caption{The $20$ spherical trajectories passing through the initial 
position of the icosapod.
}
  \label{fig2}
\end{figure}
   
\begin{figure}[!ht] 
\begin{center}
 \begin{overpic}
    [width=120mm]{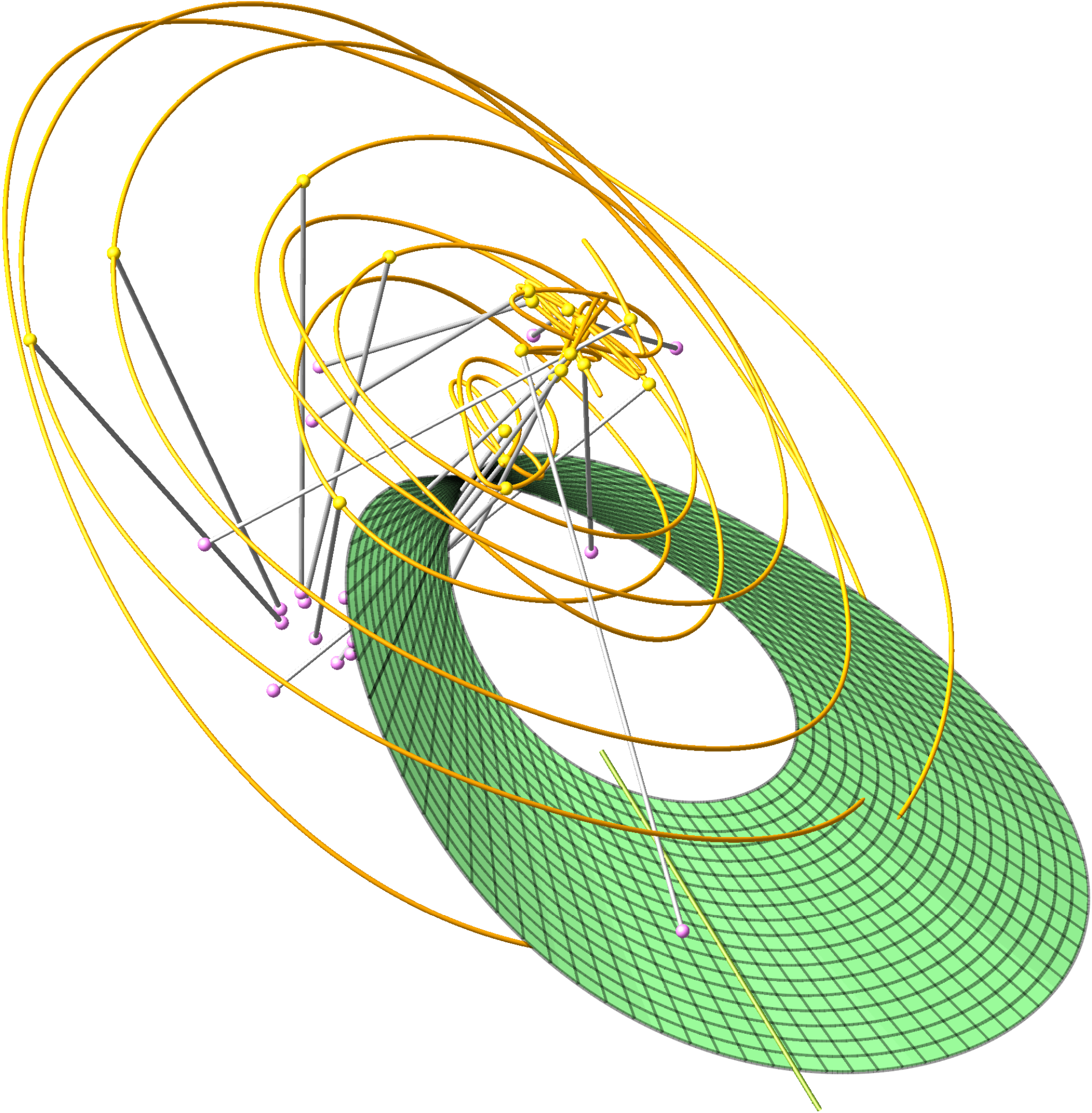}
\begin{small}
\put(70.2,0.2){$\ell$}
\end{small}
  \end{overpic} 
\end{center}
\caption{The same set-up as in Figure~\ref{fig2} but from another perspective. 
In addition a strip of the basic surface of this line-symmetric self-motion is 
illustrated, where $\ell$ is highlighted. Note that one of the two displayed 
families of curves on the surface is composed of straight lines, i.e.\ 
the generators of the ruled surface.
}
  \label{fig3}
\end{figure}

We close the paper by mentioning two open questions that we find of interest:
\begin{itemize}
\item[-]
Starting from spectrahedra of type $(a,b)$ with $a-b\ge 4$ one may construct 
mobile pods with $16$, $12$ or $8$ legs: is it true that a general mobile pod 
with $16$, $12$ or $8$ legs is line-symmetric? (It is known that $a,b$ have to 
be even numbers.)
\item[-]
Identify all cases 
where more than~$20$ points move on spheres during a line-symmetric 
motion; i.e.\ (a) 1-dim, (b) 2-dim or even (c) 3-dim set of points with spherical 
trajectories. Case~(c) is completely known due to Bricard~\cite{Bricard}, but 
cases~(a) and~(b) are still open. Examples for both cases are known (cf.\ 
Section~\ref{review:planar} and \cite{Borel, Bricard, kramesVI, 
nawratil_linpent}). 
\end{itemize}

\section*{Acknowledgments}

The first and third-named author's research is supported by the Austrian 
Science Fund (FWF): W1214-N15/DK9 and P26607 - ``Algebraic Methods in 
Kinematics: Motion Factorisation and Bond Theory''. The second-named author's 
research is funded by the Austrian Science Fund (FWF): P24927-N25 - 
``Stewart-Gough platforms with self-motions''. 

\bibliographystyle{amsalpha}
\bibliography{icosapods}

\end{document}